\title[Efficient and Optimal Policy Gradient Algorithm for Corrupted Multi-armed Bandits]{Efficient and Optimal Policy Gradient Algorithm for Corrupted Multi-armed Bandits}
\author{Jiayuan Liu}
\affiliation{
  \institution{Carnegie Mellon University}
  \city{Pittsburgh}
  \country{United States}}
\email{jiayuan4@andrew.cmu.edu}
\author{Siwei Wang}
\affiliation{
  \institution{Microsoft Research Asia}
  \city{Beijing}
  \country{China}}
\email{siweiwang@microsoft.com}
\author{Zhixuan Fang}\thanks{Corresponding authors: Siwei Wang (\texttt{siweiwang@microsoft.com}), Zhixuan Fang (\texttt{zfang@mail.tsinghua.edu.cn}).}
\affiliation{
  \institution{Tsinghua University}
  \institution{Shanghai Qi Zhi Institute}
   \country{China}}
\email{zfang@mail.tsinghua.edu.cn}
\begin{abstract}
In this paper, we consider the stochastic multi-armed bandits problem with adversarial corruptions, where the random rewards of the arms are partially modified by an adversary to fool the algorithm. We apply the policy gradient algorithm SAMBA to this setting, and show that it is computationally efficient, and achieves a state-of-the-art $O(K\log T/\Delta) + O(C/\Delta)$ regret upper bound, where $K$ is the number of arms, $C$ is the unknown corruption level, $\Delta$ is the minimum expected reward gap between the best arm and other ones, and $T$ is the time horizon. Compared with the best existing efficient algorithm (e.g., CBARBAR), whose regret upper bound is $O(K\log^2 T/\Delta) + O(C)$, we show that SAMBA reduces one $\log T$ factor in the regret bound, while maintaining the corruption-dependent term to be linear with $C$. This is indeed asymptotically optimal. We also conduct simulations to demonstrate the effectiveness of SAMBA, and the results show that SAMBA outperforms existing baselines.
\end{abstract}
\keywords{Multi-armed Bandits; Corruption; Policy Gradient}
\newtheorem{theorem}{\bf{Theorem}}
\newtheorem{remark}{\bf{Remark}}
\newtheorem{lemma}[theorem]{\bf{Lemma}}
\newtheorem{definition}[theorem]{\bf{Definition}}
\newtheorem{fact}[theorem]{\bf{Fact}}
\newcommand{\E}{\mathbb{E}}
\newcommand{\argmax}{\operatornamewithlimits{argmax}}
\newcommand{\BibTeX}{\rm B\kern-.05em{\sc i\kern-.025em b}\kern-.08em\TeX}
\begin{document}


\pagestyle{fancy}
\fancyhead{}


\maketitle 


\section{Introduction}

Multi-armed bandits (MAB) model requires the learning policy to learn from feedback to optimize decision-making in complex and uncertain environments~\cite{gupta2019better}. 
In this model, there are $K$ arms, and each arm $a$ is associated with a reward distribution $\mathcal{F}_a$. In each round $t\in[T]$, a player can choose one arm $a$ from the $K$ arms to pull and observe a reward $R_a\sim \mathcal{F}_a$. 
Denote $r_a$ the expected reward of arm $a$, $a^* = \argmax_{a} r_a$ the optimal arm, and $r^* = \max_a r_a$ the highest expected reward.
Then we let $\Delta_a=r^*-r_a$, $\Delta=\min_{\Delta_a>0}\Delta_a$, 
and define cumulative regret as the expected difference between the cumulative reward from pulling the optimal arm and the cumulative reward of the algorithm, i.e., pulling arm $a$ once incurs a regret of $\Delta_a$.
The aim of the player is to choose arms properly to minimize cumulative regret.

MAB captures the basic tradeoff between exploration and exploitation in online learning, and is widely adopted in real-world applications, e.g., when a news website picks an arriving article header to show to maximize the users' clicks, and when an investor chooses a stock for investment to maximize the total wealth~\cite{slivkins2019introduction}. 
Because of this, there is abundant research related to MAB problems, which proposes solutions including 
Upper Confidence Bound~\cite{auer2002finite}, Active Arm Elimination~\cite{even2006action}, Thompson Sampling~\cite{agrawal2017near}, etc.

However, in some applications, such as a recommendation system that suggests restaurants to customers, while most inputs follow a stochastic pattern from a fixed distribution, some inputs would be corrupted, e.g., injected by fake reviews from the restaurant's competitors~\cite{lykouris2018stochastic}. 
In addition, in machine learning applications, data may be imperfect or manipulated. Studying corruption bandits helps develop learning algorithms that remain effective even when data is corrupted, which is useful in fields such as federated learning~\cite{duanyi2023constructing} and distributed sensor networks~\cite{chen2023dynamic}. 

Corruption also exists in other applications such as online advertising and cybersecurity. 
In this paper, we consider the stochastic multi-armed bandits problem with adversarial corruptions, where the rewards of the arms are partially modified by an adversary to fool the algorithm~\cite{he2022nearly,kapoor2019corruption}. 
At each time step $t$, before an arm is pulled, the adversary can make corruptions, i.e., shift the expected reward of any arm $a$ to any corrupted value with cost $\max_a |r_a - r_a'(t)|$, where $r_a'(t)$ is the expected reward of arm $a$ after such corruption. The only constraint for the adversary is that his total cost cannot exceed corruption level $C$, i.e., $\sum_t \max_a |r_a - r_a'(t)| \le C$, while this $C$ also keeps unknown to the player. 

Existing algorithms pay a high cost for robustness against adversarial corruptions. 
The current state-of-the-art \emph{combinatorial algorithms} (i.e., those containing solely combinatorial operations such as enumeration and basic calculations, and with computational cost in each time step independent of the time horizon $T$)
exhibit a regret upper bound of $O(\log^2 T + C)$ for corrupted bandits, e.g.,~\citet{xu2021simple}. 
This implies that the algorithm's regret is not tight (i.e., it has one more $\log T$ factor compared to the $\Omega(\log T)$ regret lower bound~\cite{lykouris2018stochastic}), and suffers an $O(\log^2 T)$ regret even if there is no corruption. 

In this paper, our aim is to solve the above challenge and find efficient bandit algorithms that can handle adversarial corruptions without overhead, i.e., the regret upper bound approaches the bound in standard MAB as corruption level $C$ decays to zero.
Recent work~\cite{denisov2020regret} proposes a combinatorial algorithm Stochastic Approximation Markov Bandit Algorithm (SAMBA) to solve the standard MAB problem. 
In this paper, we employ this algorithm to address the corrupted bandits problem.

We are interested in SAMBA due to its adoption of a Markovian policy, in which the distribution of the chosen arm at step $t+1$ depends solely on the distribution of the chosen arm at step $t$, as well as the chosen arm and the observation at step $t$.
This is a desired property for corrupted bandits and reduces the complexity of the analysis.
Based on such a property, we show that 
SAMBA achieves a regret upper bound of $O(\log T + C)$, which is a major improvement compared to the $O(\log^2 T+C)$ regret upper bound of the best existing combinatorial algorithms. Meanwhile, our regret upper bound matches i) the $\Omega(\log T)$ regret lower bound when there is no corruption; and ii) the $\Omega(C)$ regret lower bound with corruption level $C$. This shows that SAMBA is indeed asymptotically optimal. 
We also conduct experiments to compare the performance of SAMBA with other existing baselines, whose results demonstrate the empirical effectiveness of SAMBA. 

\subsection{Our Main Contribution}
The aim of this research is to develop a combinatorial anti-corruption multi-armed bandits algorithm that is fast, easily implementable, and has a better performance guarantee than existing works. 
We employ SAMBA algorithm to tackle the corrupted bandits problem, marking the inaugural utilization of a policy gradient algorithm in this scenario.

Our primary contribution lies in three aspects. Firstly, we are the first to employ and analyze combinatorial policy gradient algorithms in the context of corrupted bandits. 
Secondly, we theoretically prove SAMBA's exceptional performance in the corrupted bandits setting. In addition, we demonstrate the empirical performance advantage of SAMBA over existing baselines. 
Our analysis is groundbreaking, as it is the first to prove that a combinatorial algorithm can achieve the optimal regret upper bound in the corrupted bandits setting. This result highlights the significance of our work in advancing the understanding and practicality of combinatorial approaches for dealing with corruption in bandit problems.

\subsection{Related Work}

\begin{table*}[!ht] 
    \renewcommand\arraystretch{2}
    \centering
    \caption{Comparison of different corrupted bandits algorithms. }\label{Table_1} 
    \begin{tabular}{|p{5.5cm}|c|c|c|}
    \hline
    \textbf{Algorithm} &\textbf{Known} $C$ &\textbf{Combinatorial} & \textbf{Regret Bound}\\ \hline
     Fast-Slow AAE Race~\cite{lykouris2018stochastic} & Yes & Yes& $O\big(KC\sum_{i\neq i^*}\frac{\log^2 T}{\Delta_i}\big)$ \\ \hline
     Multi-Layer AAE Race~\cite{lykouris2018stochastic} & No & Yes& $O\big(KC\sum_{i\neq i^*}\frac{\log^2 T}{\Delta_i}\big)$\\ \hline
    BARBAR~\cite{gupta2019better} & No & Yes& $O\big(KC+\sum_{i\neq i^*}\frac{\log^2 T}{\Delta_i}\big)$\\ \hline
    Cooperative Bandit Algorithm Robust to Adversarial Corruptions~\cite{liu2021cooperative} & No & Yes& $O\big(C+\frac{K\log^2 T}{\Delta}\big)$\\ \hline 
    CBARBAR~\cite{xu2021simple}  & No & Yes& $O\big(C+\sum_{i\neq i^*}\frac{\log^2 T}{\Delta_{i}}\big)$\\ \hline
    Tsallis-INF~\cite{zimmert2021tsallis}, FTRL~\cite{jin2020simultaneously}, FTPL~\cite{honda2023follow} & No & No& $O\big(C+\frac{K\log T}{\Delta}\big)$\\ \hline 
    \textbf{SAMBA~\cite{denisov2020regret} (with our analysis)} & \textbf{No} & \textbf{Yes} & $\boldmath{O\big(\frac{C}{\Delta}+\frac{K\log T}{\Delta}\big)}$ \\ \hline
    Regret Lower Bound~\cite{lykouris2018stochastic}& -- & -- & $\Omega\big(C+\frac{K\log T}{\Delta}\big)$\\ \hline
    \end{tabular}
\end{table*}

\citet{lykouris2018stochastic} is the first to consider stochastic bandits with adversarial corruptions.
They propose Fast-Slow Active Arm Elimination Race algorithm that achieves a high probability regret upper bound of $O\big(KC\sum_{i\neq i^*}\frac{\log^2 T}{\Delta_i}\big)$ when $C$ is known, and Multi-layer Active Arm Elimination Race algorithm that achieves the same high probability regret upper bound when $C$ is unknown.
They also show that a linear degradation to the total corruption amount $C$ is the best one can do, i.e., with corruption level $C$, any algorithm must suffer a regret lower bounded by $\Omega(C)$. 

\citet{gupta2019better} introduces a new algorithm called 
BARBAR, which reduces the regret upper bound to $O\big(KC+\sum_{i\neq i^*}\frac{\log^2 T}{\Delta_i}\big)$ when $C$ is unknown. 
\citet{liu2021cooperative,xu2021simple} make some further improvements on BARBAR, providing the solutions under cooperative bandits setting~\cite{liu2021cooperative} and combinatorial bandits setting~\cite{xu2021simple}. In addition, they reduce the $O(KC)$ term in the regret upper bound to $O(C)$, i.e.,  the regret upper bound of~\citet{liu2021cooperative} is $O\big(C+\frac{K\log^2 T}{\Delta}\big)$, and the regret upper bound for~\citet{xu2021simple} is $O\big(C+\sum_{i\neq i^*}\frac{\log^2 T}{\Delta_{i}}\big)$.

Except for the combinatorial algorithms that come from traditional bandit literature, there is another type of non-combinatorial algorithms that come from ``best-of-both-worlds (BOBW)'' literature. In BOBW, the algorithm needs to ensure good regret performance under both the stochastic scenario 
and the totally adversarial scenario 
\cite{bubeck2012best}. Some of the BOBW algorithms also perform well in corrupted bandits. For example, 
\citet{zimmert2021tsallis} uses Tsallis-INF algorithm with Tsallis entropy regularization and~\citet{jin2020simultaneously} uses Follow-the-Regularized-Leader (FTRL) method~\cite{audibert2009minimax,zimin2013online,zimmert2019connections} with a novel hybrid regularizer to solve the corrupted bandits problem, both of which are based on online mirror descent (OMD) method and lead to a regret upper bound of $O\big(C+\frac{K\log T}{\Delta}\big)$\footnote{Though they claimed that their regret upper bound is $O\big(\frac{K\log T}{\Delta}+\sqrt{\frac{CK\log T}{\Delta}}\big)$, we emphasize that the definition of their regret is not the same as the one we and other corrupted bandits works use. In fact, there is an $O(C)$ gap between the two kinds of regret.}. 
However, OMD algorithms are not combinatorial and require more computational power than combinatorial algorithms such as BARBAR and SAMBA. 
Specifically, in each time step, the OMD algorithms need to solve a convex optimization problem, and the regret analysis is based on the actions corresponding to the optimal points. 
In practice, one can only use optimization algorithms (e.g., gradient descent) to look for near-optimal points. 
Since there are totally $T$ convex optimization problems to solve, to guarantee a similar regret bound, the gap between the approximate points and the optimal points should depend on $T$ (e.g., ${1\over T}$).
Therefore, the complexity required for each step also depends on $T$. 
As a comparison, combinatorial algorithms (e.g., BARBAR and SAMBA) only need $O(K)$ additions or multiplications in each time step.

Recent findings show that sampling algorithms can be more computationally efficient than optimization algorithms~\cite{ma2019sampling,sun2023revisiting}. 
\citet{honda2023follow} incorporates this idea and uses follow-the-perturbed-leader-based (FTPL) method~\cite{abernethy2015fighting,kalai2005efficient} which replaces the procedure of solving the optimization problem in FTRL by multiple samplings and resamplings. 
However, FTPL does not completely solve the complexity challenge. Specifically, though the expected computation cost at each time step is $O(K)$, the variance of computation cost at each time step is $O(T)$, making it still non-combinatorial.

An overall comparison of different algorithms is given in Table \ref{Table_1}. Note that the state-of-the-art combinatorial algorithms have $O(\log^2 T+C)$ regret upper bounds, 
while only non-combinatorial algorithms can achieve $O(\log T + C)$ regret upper bound. 
Our analysis shows that a combinatorial algorithm, SAMBA, can achieve an $O(\log T + C)$ regret upper bound, which matches the regret lower bound for the corrupted bandits.

\section{Preliminaries}

\subsection{Multi-armed Bandits}

A multi-armed bandits instance is a tuple $(\mathcal{A}, \bm{r}, T)$. Here, i) $\mathcal{A} = \{1,2,\cdots,K\}$ is the set of arms and $K$ is the number of arms; ii) $\bm{r} = [r_1,\cdots,r_K] \in [0,1]^K$ are the corresponding expected rewards of the $K$ arms; and iii) $T$ is the time horizon. 
At each time step $t \le T$, the player must choose an arm $a(t) \in \mathcal{A}$ to pull. After that, he will receive a random reward (feedback) $R_{a(t)}(t)$.
In this paper, for simplicity of analysis, we focus on the case that the random rewards are Bernoulli, i.e., $R_{a(t)}(t)$ are drawn from a Bernoulli distribution with mean $r_{a(t)}$ independently. Our results can be easily extended to the general bounded-reward case.

The player can use the history information $\mathcal{H}_{t-1}$ to generate a random distribution $p(t)$ on the action set $\mathcal{A}$, and then draw his choice $a(t)$ from $p(t)$, where $\mathcal{H}_{t-1} = \{(a(\tau), R_{a(\tau)}(\tau))\}_{\tau \le t-1}$ are the previous arm-reward pairs. 
The goal of the player is to choose the random distribution $p(t)$ properly to maximize the cumulative reward, or minimize the cumulative regret. 
The cumulative regret is defined as the expected reward gap between the real gain and the best one can do, i.e., always selecting the arm with the highest expected reward.
By denoting $a^*=\arg\max_{a\in\mathcal{A}}r_a$ and $r^*=r_{a^*}$, the cumulative regret of policy $\pi$ equals
$Rg(T):=r^*T-\mathbb{E}\left[\sum_{t=0}^{T-1}\sum_{a\in\mathcal A}I_a(t)R_a(t)\right]=\sum_{a:a\neq a^*}(r^*-r_a)\mathbb{E}\left[\sum_{t=0}^{T-1}p_a(t)\right]$,
where $I_a(t)=1$ if and only if arm $a$ is pulled in round $t$. Let $\Delta_a = r^* - r_a$ and assume that $\Delta_a > 0$ for any $a \ne a^*$ (i.e., there is one unique optimal arm), we can write the cumulative regret as $Rg(T)=\sum_{a:a\neq a^*}\Delta_a\mathbb{E}\left[\sum_{t=0}^{T-1}p_a(t)\right]$.

\subsection{Corrupted Bandits}
In a corrupted bandits instance, except for the basic components of the bandit model, there is another adversary who aims to fool the player. 
Specifically, the adversary is aware of the history information as well as the learning policy of the player. However, he cannot obtain the same randomness as used by the user. 
That is, the adversary knows the random distribution $p(t)$ of how the player will choose arm $a(t)$, but does not know the exactly chosen arm $a(t)$.
Based on this knowledge, at each time step $t$, the adversary can change the expected reward of each arm from $\bm{r}$ to $\bm{r}'(t)$, at a cost of $c(t) = \max_{a\in \mathcal{A}} |r_a - r_a'(t)|$. 
In this case, if the player chooses to pull the arm $a(t)$, then his random reward (and feedback) $R_{a(t)}(t)$ is no longer drawn from Bernoulli distribution with mean $r_{a(t)}$, but from Bernoulli distribution with mean $r_{a(t)}'(t)$.

The goal of the adversary is to let the player suffer regret as much as possible, given the constraint that his total cost of corruption could not exceed the corruption level $C$. 
Here, the definition of cumulative regret is the same as classic MAB problems, i.e., we are still comparing the arms under their true expected rewards but not the corrupted expected rewards\footnote{Most of the existing literature uses this definition, e.g.,~\cite{gupta2019better,liu2021cooperative,lykouris2018stochastic,xu2021simple}. As for those who compare the arms under their corrupted expected rewards, e.g.,~\cite{jin2020simultaneously,zimmert2021tsallis}, directly adding $C$ to their regret upper bound leads to a regret bound under our definition.}. 
On the other hand, the goal of the player is to design algorithms such that the regret is still limited even if there is such an adversary. As in many existing works, we assume that the player does not know the corruption level $C$.

Note that our corruption method is slightly different from the existing literature, i.e., the adversary changes the expected reward but not the realized feedback.
In fact, if we use a function to change the realized reward feedback $R$ to $R' = f(R)$ (even for random functions) after seeing the feedback $R \sim \mathcal{D}$, we can get a new reward distribution $\mathcal{D}'$ where $R' \sim \mathcal{D}'$. Hence, our approach (directly changing the reward distribution to $\mathcal{D}'$) is more general than the classic approach. Moreover, the constraint on the adversary in the classic approach is $\sum_t |R_a(t)-R_a'(t)| \le C$, while in our approach it is $\sum_t \left|\mathbb{E}[R_a(t)-R_a'(t)]\right| \le C$, which is looser than the former one. As a result, the adversary in our approach could be more powerful than the classic one with the same $C$. 

\subsection{SAMBA Algorithm}

\begin{algorithm}[ht]
\caption{SAMBA Algorithm}
\label{alg:samba}
\begin{algorithmic}
\Require $\alpha \in (0,1)$
\State \textbf{Init: } $p_a(1):=1/K$ for $a=1,\ldots, K$
\For{$t=1,\ldots, T$}
\State Update the current leading arm $a_{l}\gets \arg\max_a p_a(t)$ 
\State Draw $a(t)$ randomly from probability distribution $p(t)$, and observe the random reward $R_{a(t)}(t)$
\If{$a(t)=a_l$} 
    \State $p_{a'}(t+1)\gets p_{a'}(t)-\alpha p_{a'}^2(t)R_{a(t)}(t)/p_{a_l}(t)$, $\forall a'\neq a_l$
    \Else
    \State $p_{a(t)}(t+1)\gets p_{a(t)}(t)+\alpha p_{a(t)}(t)R_{a(t)}(t)$
    \State $p_a(t+1)\gets p_a(t)$, $\forall a\notin \{a(t), a_l\}$
\EndIf
\State $p_{a_l}(t+1)\gets 1-\sum_{a'\neq a_l} p_{a'}(t+1)$
\EndFor
\end{algorithmic}
\end{algorithm}

The SAMBA algorithm~\cite{denisov2020regret} is described in Algorithm \ref{alg:samba}. 
The policy is a probability distribution vector $p(t)=[p_1(t), \ldots, p_{K}(t)]$ from which an arm is sampled in each round, and is initialized to $p_a(1)=1/K, \forall a\in[K]$. 
In each round $t$, an arm $a(t)$ is sampled from the distribution $p(t)$. The player then pulls arm $a(t)$ and gets a reward $R_{a(t)}(t)$ (a possibly corrupted reward in corrupted bandits).
The probabilities of all the non-leading arms $\forall a\ne a_l$ in $p(t)$ will be updated after the player observes the reward $R_{a(t)}(t)$ according to 
\begin{align}
    p_a(t+1)\gets p_a(t)+\alpha p_a(t)^2\left(\frac{R_a(t)I_a(t)}{p_a(t)}-\frac{R_{a_l}(t)I_{a_l}(t)}{p_{a_l}(t)}\right) \label{eq:update}
\end{align}
where $a_l$ is the current leading arm, i.e., the arm with the highest probability $p_a(t)$. Note that the update scheme applies importance sampling because the player can only observe the reward from the pulled arm. 
After updating the probability of the non-leading arms, the leading arm's probability is given by $p_{a_l}(t+1) = 1 - \sum_{a'\ne a_l} p_{a'}(t+1)$.

\citet{denisov2020regret} prove that SAMBA achieves an $O(\log T)$ regret upper bound in the classic MAB model, which is stated in the following fact. 

\begin{fact}[\citet{denisov2020regret}]\label{fact_1}
    If constant $\alpha<\frac{\Delta}{r^*-\Delta}$, then the SAMBA algorithm for multi-armed bandits problem without corruption ensures a regret 
    $Rg(T)\leq \frac{K}{\alpha\Delta}\log T+Q_0 =  O\left(\frac{K}{\Delta}\log T\right)$,
    where $Q_0:=\sum_{t=0}^{\infty}\mathbb{P}(p_{a^*}(t)\leq \frac 1 2)<\infty$ is proved in~\citet{denisov2020regret} to be a finite constant. 
\end{fact}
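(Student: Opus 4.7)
The plan is to start from the standard decomposition
\[Rg(T)=\sum_{a\ne a^*}\Delta_a\sum_{t=0}^{T-1}\mathbb{E}[p_a(t)]\]
and bound $\sum_t\mathbb{E}[p_a(t)]$ for each suboptimal arm $a$ using an explicit drift argument on the SAMBA dynamics. Since $a_l(t)$ is determined by $p(t)$ and $R_{a(t)}(t)$ is conditionally unbiased, the update rule~\eqref{eq:update} yields the exact conditional drift
\[\mathbb{E}\bigl[p_a(t+1)-p_a(t)\,\big|\,p(t)\bigr]=\alpha\, p_a(t)^2\bigl(r_a-r_{a_l(t)}\bigr),\]
which equals $-\alpha\Delta_a p_a(t)^2$ whenever $a_l(t)=a^*$.

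I would then split on the event $G_t=\{p_{a^*}(t)>1/2\}$. Because no other arm can simultaneously hold more than half of the probability mass, $G_t$ forces $a_l(t)=a^*$; conversely $\{a_l(t)\ne a^*\}\subseteq\{p_{a^*}(t)\le 1/2\}$, so $\sum_t\mathbb{P}(a_l(t)\ne a^*)\le Q_0$. Combining the drift identity with the crude bound $|r_a-r_{a_l}|\le 1$ off $G_t$, taking total expectations, and applying Jensen's inequality $\mathbb{E}[p_a^2(t)]\ge\mathbb{E}[p_a(t)]^2$, one obtains a scalar recursion
\[x_{t+1}\le x_t-\alpha\Delta_a\,x_t^2+\varepsilon_t,\qquad x_t:=\mathbb{E}[p_a(t)],\]
with error terms summing to $\sum_t\varepsilon_t=O(\alpha Q_0)$ thanks to the finiteness of $Q_0$.

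The next step is to invert this recursion by examining $1/x_t$: the elementary inequality $1/(1-y)\ge 1+y$ gives $1/x_{t+1}\ge 1/x_t+\alpha\Delta_a$ up to the $\varepsilon_t$ corrections, so $1/x_t\ge K+\alpha\Delta_a\,t-O(Q_0)$ and consequently $\sum_{t=1}^T x_t=O(\log T/(\alpha\Delta_a))$. This mirrors the continuous-time ODE $\dot x=-\alpha\Delta_a x^2$ whose solution $x(t)=1/(1/x(0)+\alpha\Delta_a t)$ integrates to a logarithm. Summing the per-arm bound over the $K-1$ suboptimal arms and using $\Delta_a\ge\Delta$ produces the leading term $K\log T/(\alpha\Delta)$, while the accumulated good/bad-event corrections contribute the additive $Q_0$, matching Fact~\ref{fact_1}.

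The main obstacle, in my view, is the careful bookkeeping of the $\varepsilon_t$ corrections inside the recursion: one must preserve positivity of $x_t$ throughout and ensure the errors telescope into a constant $O(Q_0)$ rather than inflating the $\log T$ bound. The hypothesis $\alpha<\Delta/(r^*-\Delta)$ does not enter the drift calculation itself but is what underlies the finiteness of $Q_0$ (established separately in~\cite{denisov2020regret}), since it guarantees a strict positive drift of $p_{a^*}$ toward $1$ once the optimal arm becomes the leader; without this, $Q_0$ could diverge and the whole argument would collapse.
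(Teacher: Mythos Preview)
Your approach is correct in spirit but takes a different route from the paper's (and Denisov's) argument. The paper works with the \emph{aggregate} quantity $q_{a^*}(t)=1-p_{a^*}(t)=\sum_{a\ne a^*}p_a(t)$ rather than with each $p_a$ individually, and it handles the bad event $\{p_{a^*}\le 1/2\}$ not by carrying error terms $\varepsilon_t$ through the recursion but by passing to the \emph{embedded chain} $\hat q(s)$ that records $q_{a^*}$ only at the times when $p_{a^*}>1/2$. On that embedded chain the drift inequality $\mathbb{E}[\hat q(s+1)\mid \hat H(s)]-\hat q(s)\le -\tfrac{\alpha\Delta}{K}\hat q(s)^2$ holds \emph{cleanly} (Jensen across arms produces the $1/K$), Lemma~\ref{lemma:bound1} gives $\mathbb{E}[\hat q(s)]\le \tfrac{K}{2K+\alpha\Delta s}$, and summing yields the $\tfrac{K}{\alpha\Delta}\log T$ term. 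The remaining times are simply counted by $Q_0$, contributing at most $Q_0$ to $\sum_t\mathbb{E}[q_{a^*}(t)]$ since $q_{a^*}\le 1$.

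What each approach buys: the embedded-chain route sidesteps exactly the ``careful bookkeeping of the $\varepsilon_t$ corrections'' that you flag as the main obstacle, and it yields the additive $Q_0$ \emph{once} rather than the $O(KQ_0)$ that your per-arm summation would naturally produce. Your route, on the other hand, keeps the gap $\Delta_a$ arm-by-arm and would in fact give the sharper leading term $\sum_{a\ne a^*}\tfrac{\log T}{\alpha}$ (since $\Delta_a\cdot\tfrac{1}{\alpha\Delta_a}$ cancels) rather than the stated $\tfrac{K}{\alpha\Delta}\log T$; you then loosen this back by invoking $\Delta_a\ge\Delta$, which is unnecessary. One small caveat: your drift identity $\mathbb{E}[p_a(t+1)-p_a(t)\mid p(t)]=\alpha p_a(t)^2(r_a-r_{a_l})$ is only valid when $a\ne a_l$; off $G_t$ the suboptimal arm $a$ can itself be the leader, and then its update is $p_a(t+1)=1-\sum_{a'\ne a}p_{a'}(t+1)$. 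This is still absorbed by your $\varepsilon_t$ scheme (the one-step change is $O(\alpha)$ and those steps are counted by $Q_0$), but it should be stated explicitly.
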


\section{Regret of SAMBA under Corrupted Bandits}\label{sec:mainsec}

Though SAMBA is not specially optimized for the corrupted bandits setting, we surprisingly find out that it works very well even when there is an adversary who tries to fool the algorithm by corruptions.

\begin{theorem}\label{thm:main}
    If constant $\alpha<\frac{\Delta}{r^*-\Delta}$, then the SAMBA algorithm for multi-armed bandits problem with adversarial corruption level $C$ ensures a regret \[Rg(T)=O\left(\frac{K}{\Delta}\log T +\frac C \Delta \right).\]
\end{theorem}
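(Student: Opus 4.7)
The plan is to adapt the drift-based analysis of SAMBA from Fact~\ref{fact_1} to the corrupted setting. Writing $\epsilon_a(t) := r_a'(t) - r_a$ for the corruption perturbation at round $t$, we have $|\epsilon_a(t)| \leq c(t)$ with $\sum_t c(t) \leq C$. A direct computation from the update rule~\eqref{eq:update}, taking expectation over $a(t) \sim p(t)$ and conditioning on the event $\{a_l(t) = a^*\}$ (equivalently $p_{a^*}(t) > 1/2$), gives for each suboptimal arm $a \neq a^*$:
\begin{equation*}
\mathbb{E}[p_a(t+1) - p_a(t) \mid p(t),\, a_l(t) = a^*] = -\alpha \Delta_a p_a(t)^2 + \alpha p_a(t)^2 \bigl(\epsilon_a(t) - \epsilon_{a^*}(t)\bigr).
\end{equation*}
The first term is the same negative ``good'' drift that drives the uncorrupted analysis, and the second, bounded in magnitude by $2\alpha c(t) p_a(t)^2$, is the corruption perturbation.

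Next, I would mimic the uncorrupted analysis of~\citet{denisov2020regret} via the reciprocal potential $V_a(t) := 1/p_a(t)$. A Taylor expansion, whose second-order terms are $O(\alpha^2)$ and hence dominated by the main $\alpha \Delta_a$ drift (as the hypothesis $\alpha < \Delta/(r^*-\Delta)$ effectively enforces $\alpha = O(\Delta)$), converts the previous display into, on $\{a_l(t) = a^*\}$,
\begin{equation*}
\mathbb{E}[V_a(t+1) - V_a(t) \mid p(t)] \geq \alpha \Delta_a - 2\alpha c(t) - O(\alpha^2).
\end{equation*}
Summing over $t$ and using $\sum_t c(t) \leq C$ informally gives $\mathbb{E}[V_a(t)] \geq K + \alpha \Delta_a \mathbb{E}[N^*(t)] - 2\alpha C$, where $N^*(t)$ counts rounds $s < t$ with $a_l(s) = a^*$. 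The complementary ``bad-leader'' event $\{a_l \neq a^*\}$ is handled by adapting the $Q_0$-bound of Fact~\ref{fact_1}: applying the same drift machinery to $p_{a^*}$ itself shows that corruption can prolong the bad-leader set by at most $O(C/\Delta)$ in expectation.

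To turn these drift estimates into a regret bound, split the horizon at the threshold $T_1 := 4C/\Delta$. In the warm-up phase $t \leq T_1$, apply the trivial inequality $\sum_{a \neq a^*} \Delta_a p_a(t) \leq 1$ (using $\Delta_a \leq 1$ and $\sum_a p_a(t) = 1$) to bound the cumulative regret contribution by $T_1 = O(C/\Delta)$. In the convergent phase $t > T_1$, the lower bound on $V_a$ has already surpassed $\alpha \Delta_a t/2$ since the cumulative corruption $2\alpha C$ has been overtaken, so $p_a(t) \leq 2/(\alpha \Delta_a t)$; summing $\Delta_a p_a(t)$ over $t$ and then over $a$ yields $O(K\log T/\alpha) = O(K\log T/\Delta)$, using $1/\alpha = O(1/\Delta)$ from the hypothesis on $\alpha$. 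Adding the two phases gives the claimed bound.

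The main obstacle is the conversion from the expectation-level drift bound on $V_a = 1/p_a$ into a pointwise upper bound on $\mathbb{E}[p_a(t)]$, since Jensen's inequality goes the wrong way. The likely remedy is a concentration or second-moment argument showing that $V_a(t) \geq \alpha \Delta_a t/2$ holds except on a low-probability event whose residual mass can be absorbed into either the $O(C/\Delta)$ warm-up term or the bad-leader contribution. The other delicate point is ensuring that the corruption-induced inflation of the $Q_0$ constant does not exceed $O(C/\Delta)$; this should follow from running the same drift argument on $p_{a^*}$ (whose positive drift is likewise perturbed by at most $2\alpha c(t)$), but requires careful bookkeeping between leader-change rounds and warm-up rounds.
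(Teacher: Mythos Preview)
Your proposal has a genuine gap, and you already put your finger on it: the step ``$\mathbb{E}[V_a(t)]\ge \alpha\Delta_a t/2$ so $p_a(t)\le 2/(\alpha\Delta_a t)$'' is invalid, and your suggested patch by concentration is not a complete argument. The increments of $V_a=1/p_a$ are \emph{not} uniformly bounded (when arm $a$ is pulled, $V_a$ drops by $V_a\cdot\alpha/(1+\alpha)$, which grows with $V_a$), so Azuma--Hoeffding does not apply directly; switching to $\log V_a$ gives bounded increments but a drift proportional to $p_a$, which brings you back to the same problem. Without a working substitute for this conversion, nothing in the convergent phase goes through, and the warm-up/convergent split at $T_1=4C/\Delta$ is idle.

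The paper avoids this Jensen obstruction entirely by never looking at $1/p_a$ for suboptimal arms. In the good-leader regime it tracks $q_{a^*}(t)=\sum_{a\ne a^*}p_a(t)$ itself, whose conditional drift is $\le -(\alpha\Delta/K)\,q_{a^*}(t)^2$ plus a corruption term, and then uses the elementary recursion lemma $a_{t+1}\le a_t-\gamma a_t^2\Rightarrow a_t\le a_0/(1+\gamma t a_0)$ to bound $\mathbb{E}[q_{a^*}(t)]$ directly. The corruption is handled not by a single warm-up phase but by a small/large dichotomy: for $c(t)\le\Delta/4$ the drift of $q_{a^*}$ stays negative (with a halved constant), so those steps go into an embedded chain obeying the uncorrupted recursion; for $c(t)>\Delta/4$ the paper defines a \emph{recovery process} $[t_c+1,t_c']$ as the excursion until $q_{a^*}$ returns below $q_{a^*}(t_c)$, bounds its expected length by $4c(t_c)/\Delta$ via optional stopping, and handles nested recoveries by a peeling argument. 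This is the substantive new idea, and nothing in your outline corresponds to it; your warm-up phase would only cover corruptions placed at the start, whereas the adversary may inject them at arbitrary times.
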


Compared with the existing results, SAMBA achieves a more favorable regret bound by reducing one $\log T$ factor in the existing results (e.g. the $O\big(\frac{K}{\Delta}\log^2 T + C\big)$ bound in~\citet{liu2021cooperative,xu2021simple}), resulting in improved performance as the time horizon $T$ increases. In addition, SAMBA still maintains a linear dependence on the unknown corruption level $C$. 
This linear term ensures that SAMBA performs well even in the presence of high corruption levels.
Due to the space limit, we only provide some technique highlights here, and defer the whole proof to Appendix~\ref{apdx:proofMainThm}. 

As we have mentioned before, the reason that we are interested in SAMBA is that it is a Markovian policy, in which the influence of one corruption only appears once.
In fact, this is a very important and desired property to deal with corruptions, and most existing anti-corruption algorithms are trying to achieve this property.
For example, BARBAR~\cite{gupta2019better} divide the game into $\log T$ episodes, and only let the corruption in the $i$-th episode influence the arm chosen in the $(i+1)$-th episode.
With this property, we only need to bound the ``sudden'' impact of a corruption, and this substantially reduce the complexity of analysis.

Another good property we found in SAMBA is that the ``sudden'' impact of a corruption scales linearly with the corruption cost. 
Roughly speaking, if there is a corruption with cost $c(t)$ at time step $t$ and no corruptions after $t$, then it only requires about $\Theta(c(t))$ steps to counteract its influence, i.e., the probability distribution $p(t+d\cdot c(t))$ for some constant $d$ becomes close to $p(t)$ (the probability distribution before corruption) as the corruption effect is mostly counteracted in $d\cdot c(t)$ steps. 
Then, by the Markovian property of SAMBA, we could imagine that the regret incurred by corruption is approximately the regret in the next $d\cdot c(t)$ steps, and hence also scales linearly with $c(t)$.
In this way, we can finally show that the corruption dependent term of SAMBA is linear with $C$.

To better understand the above ideas, we first briefly recall how the analysis (without corruption) in~\citet{denisov2020regret} works. They divide the learning procedure into two cases: i) the case that $p_{a^*}\geq 1/2$; and ii) the case that $p_{a^*}<1/2$.
Our analysis on SAMBA for the corrupted bandits problem also follows these two cases. 

\subsection{The case when $p_{a^*}<1/2$ }
\label{sec:psmall}

When there is no corruption, \citet{denisov2020regret} consider the case where the optimal arm $a^*$ is not the leading arm $a_l$, and use $\E[p_{a^*}^{-1}(t)]$ to capture the trajectory of how $p_{a^*}(t)$ changes during the learning procedure. 
Specifically, when $a^*$ is not $a_l$, from some calculations according to SAMBA's policy update rule, one can show that

\begin{eqnarray*}
p_{a^*}^{-1}(t+1)=\left\{
\begin{aligned}
&p_{a^*}^{-1}(t)-\frac{\alpha}{1+\alpha}p_{a^*}^{-1}(t) && \ \ \text{ w.p. }\  r^*p_{a^*}(t)\\
&p_{a^*}^{-1}(t)+\alpha\frac{p_{a^*}^{-1}(t)}{p_{a_l}(t)p_{a^*}^{-1}(t)-\alpha} && \ \ \text{ w.p. }\ r_{a_l}p_{a_l}(t)\\
&p_{a^*}^{-1}(t) && \ \ \text{ otherwise}
\end{aligned}
\right.
\end{eqnarray*}
Thus, when there is no corruption,
\begin{align}
    \mathbb{E}[p_{a^*}^{-1}(t+1)|H(t)]-p_{a^*}^{-1}(t)=&\alpha r_{a_l}\frac{p_{a_l}(t)p_{a^*}^{-1}(t)}{p_{a_l}(t)p_{a^*}^{-1}(t)-\alpha}-\frac{\alpha r^*}{1+\alpha}   \notag  \\
    \leq& \alpha(r^*-\Delta)(1+\epsilon)-\frac{\alpha r^*}{1+\alpha}
\end{align}
where the last inequality holds because for the leading arm, $p_{a_l}(t)>1/K$ and $r_{a_l}\leq r^*-\Delta$, and the constant $\epsilon>0$ is chosen to satisfy $(1+\epsilon)(1+\alpha)<\frac{r^*}{r^*-\Delta}$. Such $\epsilon>0$ must exist because $\alpha<\frac{\Delta}{r^*-\Delta}$. 
Let constant $\xi:=\alpha \frac{r^*}{1+\alpha}-\alpha (r^*-\Delta)(1+\epsilon)>0$, we can get 
\begin{equation}\label{eq:smallpInduction}
    \mathbb{E}\Big[p_{a^*}^{-1}(t+1)\Big|H(t)\Big]-p_{a^*}^{-1}(t)\leq -\xi. 
\end{equation}

Note that at the beginning of the algorithm, $\E[p_{a^*}^{-1}(0)] = {1/ K^{-1}} = K$. When $p_{a^*}^{-1}(t) \leq 2$, the arm $a^*$ must be the leading arm. 
Hence, after at most $\lceil\frac{K-2}{\xi}\rceil$ steps, $\E[p_{a^*}^{-1}(t)]$ can become small enough, which results in $p_{a^*}(t)$ being large enough and $a^*$ becoming the leading arm.
Furthermore, if $a^*$ again becomes a non-leading arm, say at time $t'$, then $\E[p_{a^*}^{-1}(t')]$ is likely to be smaller than $\E[p_{a^*}^{-1}(0)]$ because in expectation the probability of sampling non-optimal arms $p_a (a\neq a^*)$ would be updated to a smaller value then. Thus, from the Markov property, the expected number of steps needed for $a^*$ to become the leading arm again is smaller than the number of steps needed in the first time. The same reasoning applies to the future ``non-leading to leading'' transitions. 
From such intuition, they prove the regret that occurs when $p_{a^*}<\frac 1 2$ (not only when $a^*$ is a non-leading arm) can be upper bounded by some constant $Q_0$, referring to~\cite{denisov2020regret} for details.

Now let's consider what happens if there is an adversary to deploy corruptions.
We also consider the case where $a^*$ is not the leading arm first. In such case, $p_{a^*}^{-1}(t+1)$ is updated as
\begin{eqnarray*}
\ \ \ \ \ \left\{
\begin{aligned}
&p_{a^*}^{-1}(t)-\frac{\alpha}{1+\alpha}p_{a^*}^{-1}(t) && \ \ \text{ w.p. }\  r_{a^*}'(t) p_{a^*}(t)\\
&p_{a^*}^{-1}(t)+\alpha\frac{p_{a^*}^{-1}(t)}{p_{a_l}(t)p_{a^*}^{-1}(t)-\alpha} && \ \ \text{ w.p. }\ r_{a_l}'(t)p_{a_l}(t)\\
&p_{a^*}^{-1}(t) && \ \ \text{ otherwise}
\end{aligned}
\right.
\end{eqnarray*}
Then, we can derive
\begin{align*}
&\mathbb{E}[p_{a^*}^{-1}(t+1)|H(t)]-p_{a^*}^{-1}(t)\\
\leq& \alpha (r_{a_l}+c(t))\frac{p_{a_l}(t)p_{a^*}^{-1}(t)}{p_{a_l}(t)p_{a^*}^{-1}(t)-\alpha}-\frac{\alpha (r^*-c(t))}{1+\alpha}\notag\\
\nonumber \leq& \alpha(r^*-\Delta+c(t))(1+\epsilon)-\frac{\alpha (r^*-c(t))}{1+\alpha}\\
=& -\xi+\alpha c(t)\big(1+\epsilon+\frac{1}{1+\alpha}\big)\label{eqn:corr1}
\end{align*}

where the first inequality is because $r'_{a_l}(t) \le r_{a_l} + c(t)$ and $r_{a^*}(t) \ge r_{a^*} - c(t)$.
We can see that, except for the regular bias $-\xi$, the corruption $c(t)$ can increase $\E[p_{a^*}^{-1}(t)]$ by at most $\alpha c(t)\big(1+\epsilon+\frac{1}{1+\alpha}\big)$.
Hence, one can imagine that if the total corruption level is upper bounded by $C$, then we need to run the algorithm for an extra number of $O(\frac{C}{\xi})$ time steps to counteract the influence of corruption.
Here, we omit the technical details, which are relegated to Appendix~\ref{apdx:proofMainThm}.

\begin{figure}[htbp]
\centering
\begin{minipage}[t]{0.5\textwidth}
\centering
\includegraphics[width=0.9\linewidth]{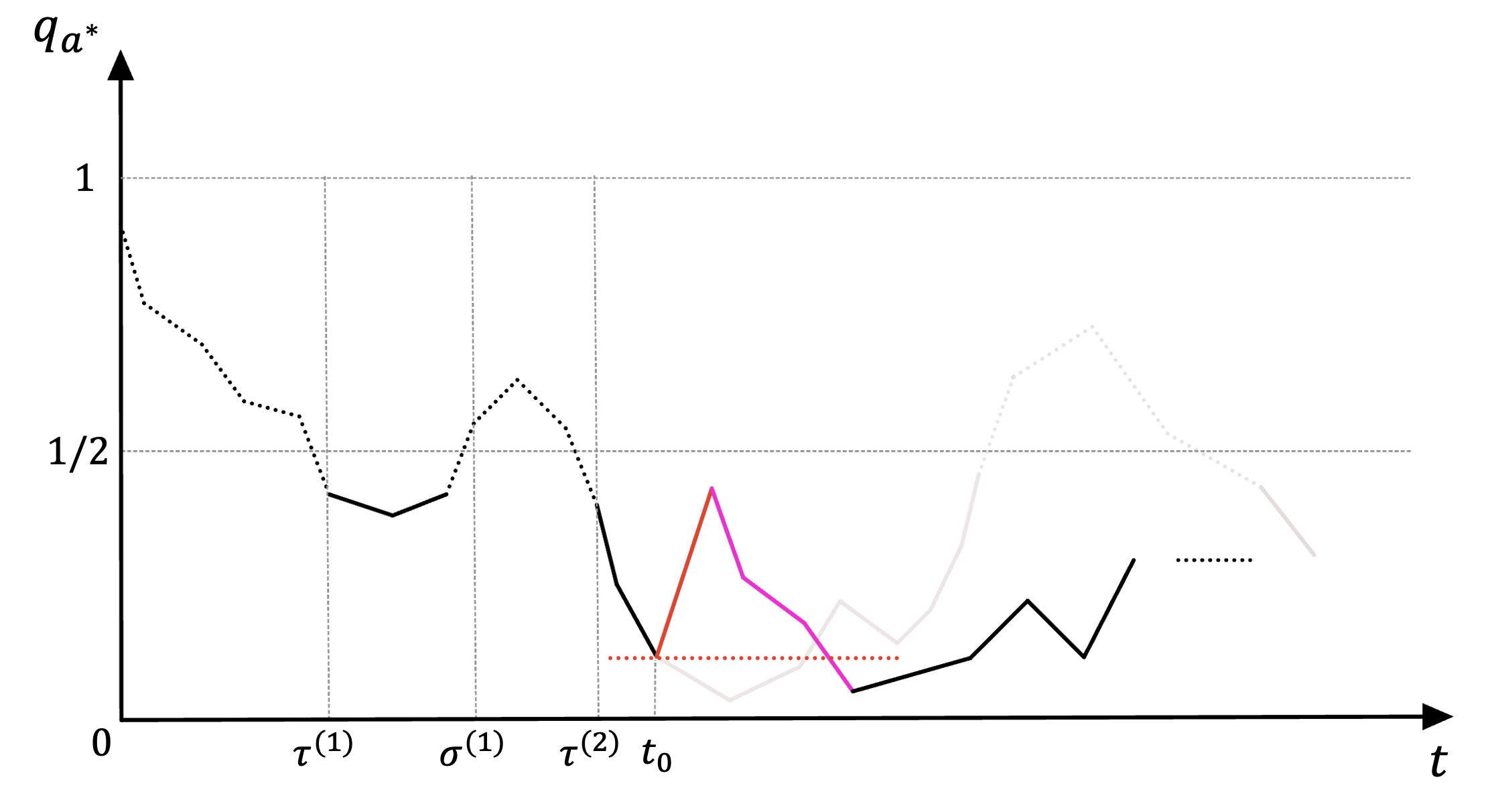}
\caption{Recovery process. }
\label{fig:embedded3-1}
\end{minipage}
\begin{minipage}[t]{0.4\textwidth}
\centering
\includegraphics[width=0.8\linewidth]{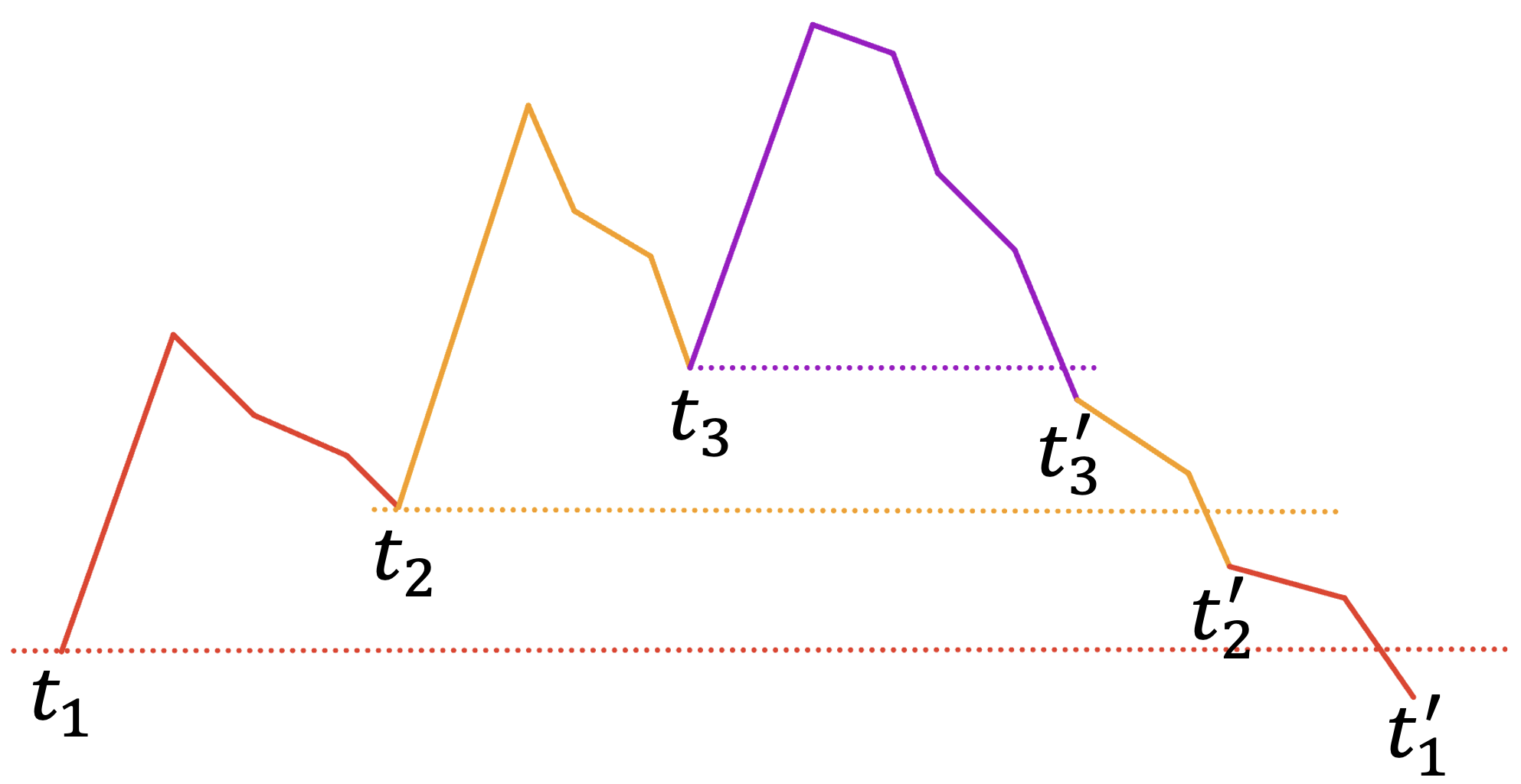}
\caption{Consecutive corruptions.}
\label{fig:consecutive1-1}
\end{minipage}
\end{figure}

\subsection{The case when $p_{a^*}\geq 1/2$}\label{sec:plarge}

On the other hand, if $p_{a^*}\geq 1/2$, then $a^*$ must be the leading arm. In this case,~\cite{denisov2020regret} uses $\E[p_a(t)]$ to capture the trajectory of how $p_{a}(t)$ changes during the learning procedure for each arm $a \ne a^*$.
Specifically, one can show that for $a\neq a^*$, 
\begin{equation}\label{eq:update1}
    \mathbb{E}[p_a(t+1)-p_a(t)|H(t)]\leq \alpha p_a(t)^2(r_a-r_{a^*})\leq -\alpha\Delta p_a(t)^2.
\end{equation}
Let $q_{a^*}:=1-p_{a^*}=\sum_{a:a\neq a^*}p_a$. From (\ref{eq:update1}) and Jensen's inequality, 
\begin{equation} 
    \mathbb{E}[q_{a^*}(t+1)|H(t)]-q_{a^*}(t)\leq \sum_{a:a\neq a^*}-\alpha\Delta p_a(t)^2\leq -\frac{\alpha\Delta}{K}q_{a^*}(t)^2.
\end{equation}
Thus, $\E[q_{a^*}(t)]$ is in the same order as $\frac{K}{2K+\alpha\Delta t}$ (whose trajectory can be easily verified to satisfy the above equation).
Hence, by taking the sum, the regret that occurs when $p_{a^*}\geq 1/2$ is upper bounded by $O(\frac{K}{\alpha\Delta}\log T)$. Details can be found in Appendix~\ref{apdx:proofMainThm}.

Now, we consider the case where there are adversarial corruptions. 
When $p_{a^*}\geq 1/2$, then $a^*$ is the leading arm and for any $a\ne a^*$, we have 
\begin{align*}
    \mathbb{E}[p_a(t+1)-p_a(t)|H(t)]&\leq \alpha p_a(t)^2\Big((r_a+c(t))-(r_{a^*}-c(t))\Big)\\
    &\leq \alpha(2c(t)-\Delta) p_a(t)^2.
\end{align*}
That is, except for regular bias $-\Delta_ap_a^2(t)$, the corruption $c(t)$ can increase $\E[p_a(t)]$ for at most $2\alpha c(t)p_a^2(t)$.
However, this increase is not a constant, and one cannot directly obtain how many time steps are needed to counteract the influence of corruption.
The trick here is to notice that after corruption, $p_a$ becomes larger, and hence its decreasing rate $\alpha p_a^2$ becomes larger than $\alpha p_a^2(t_c)$ before it fully recovers from the corruption, where $t_c$ is the time step that the corruption occurs. 
Formally, we first define the recovery process as follows. 
\begin{definition}[Recovery process]
    The recovery process of a corruption at time $t_c$ is a time interval $[t_c+1, t_c']$ on process $\{q_{a^*}(t)\}$ such that $t_c'$ is the first time step satisfying $t_c'\geq t_c+1$ and $q_{a^*}(t_c')\leq q_{a^*}(t_c)$. 
\end{definition}
Roughly speaking, the recovery process of corruption at time $t_c$ is the time steps required to let $q_{a^*}(t)$ fall below $q_{a^*}(t_c)$.

\begin{figure*}[htb]
\centering
\begin{subfigure}{0.49\linewidth}
    \centering
    \includegraphics[width=0.9\textwidth]{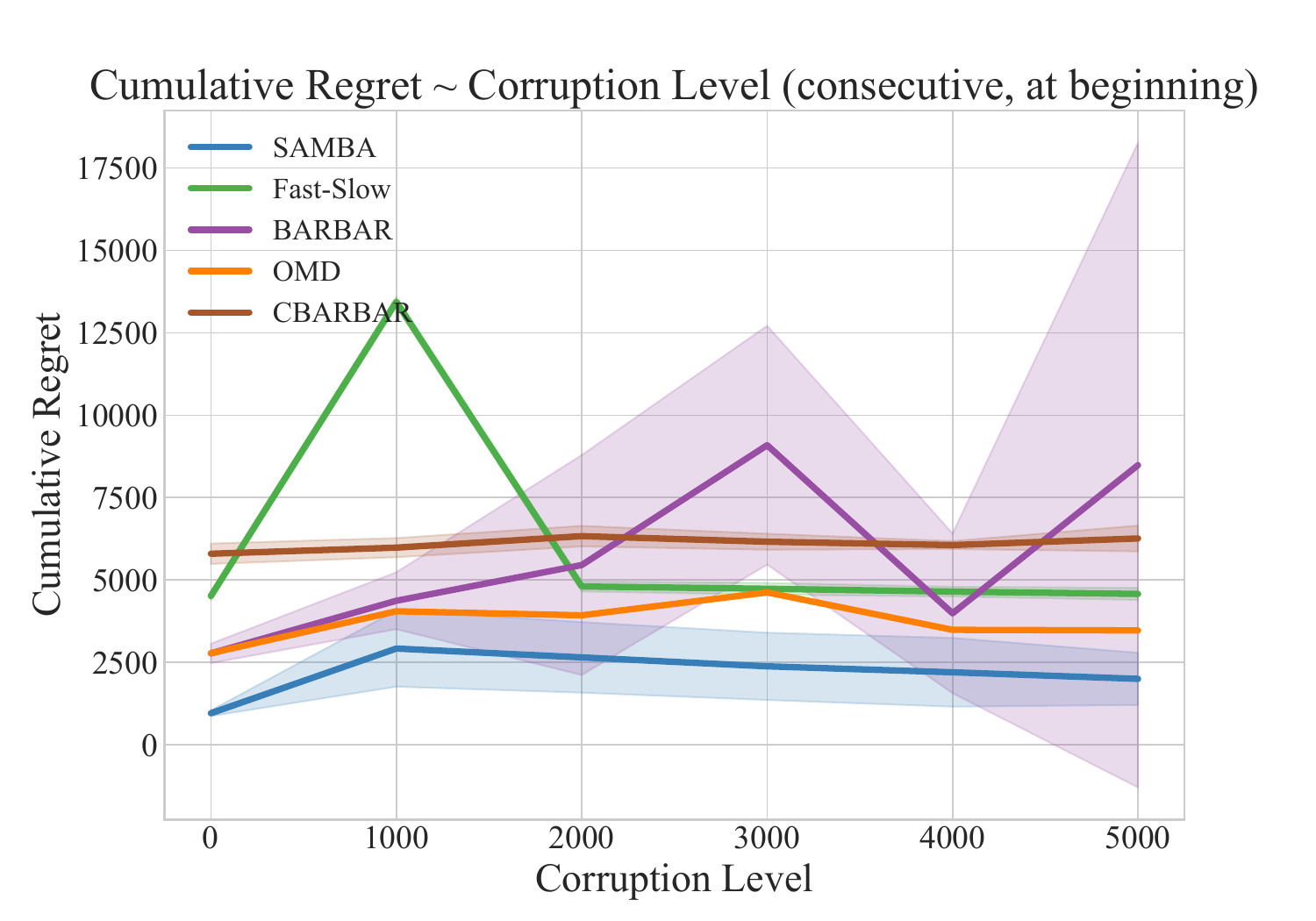}
    \caption{Corruption scheme 1: consecutive, at beginning.}
    \label{fig:first}
\end{subfigure}
\begin{subfigure}{0.49\linewidth}
    \centering
    \includegraphics[width=0.9\textwidth]{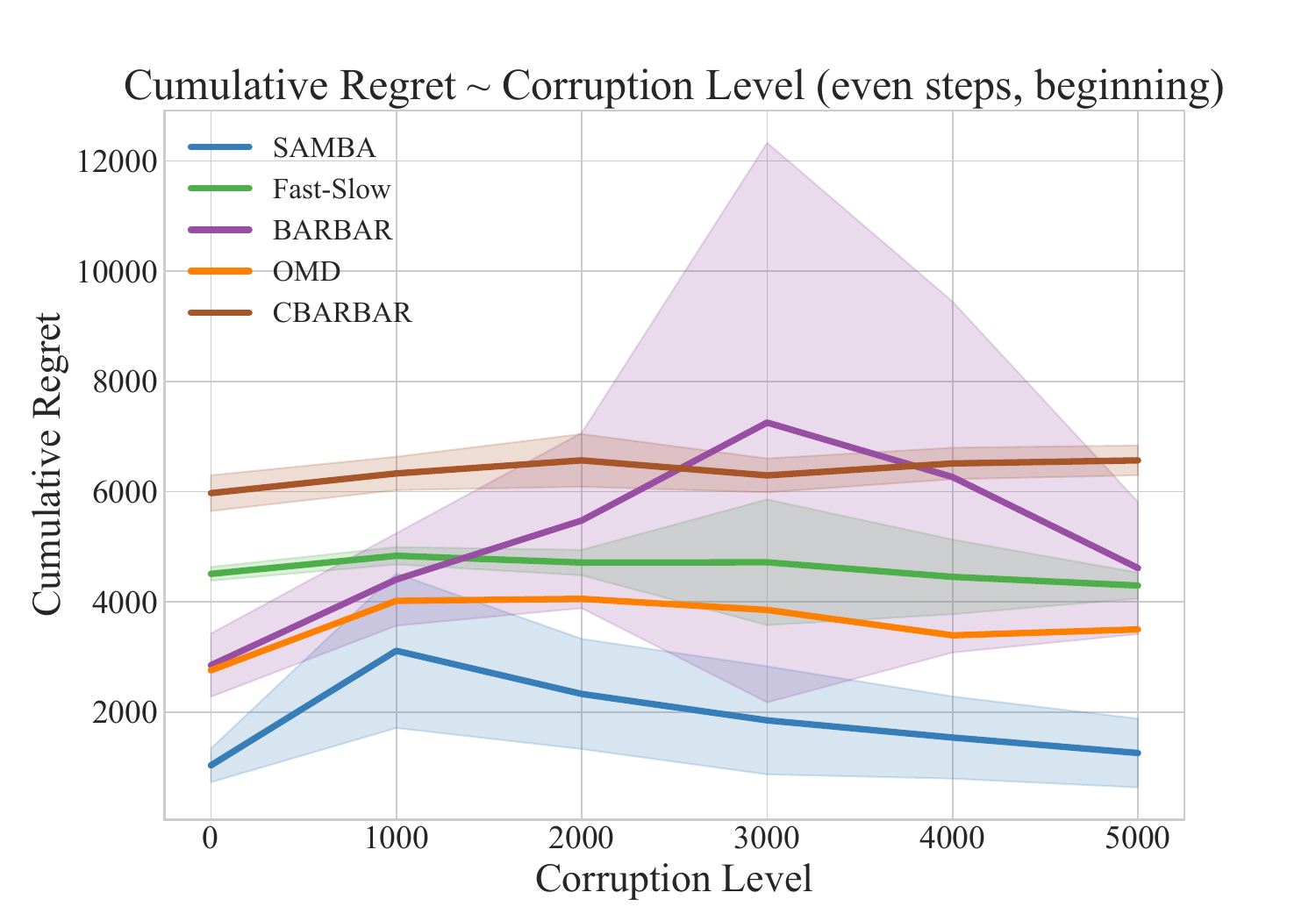}
    \caption{Corruption scheme 2: even steps, at beginning.}
    \label{fig:second}
\end{subfigure}
\begin{subfigure}{0.49\linewidth}
    \centering
    \includegraphics[width=0.9\textwidth]{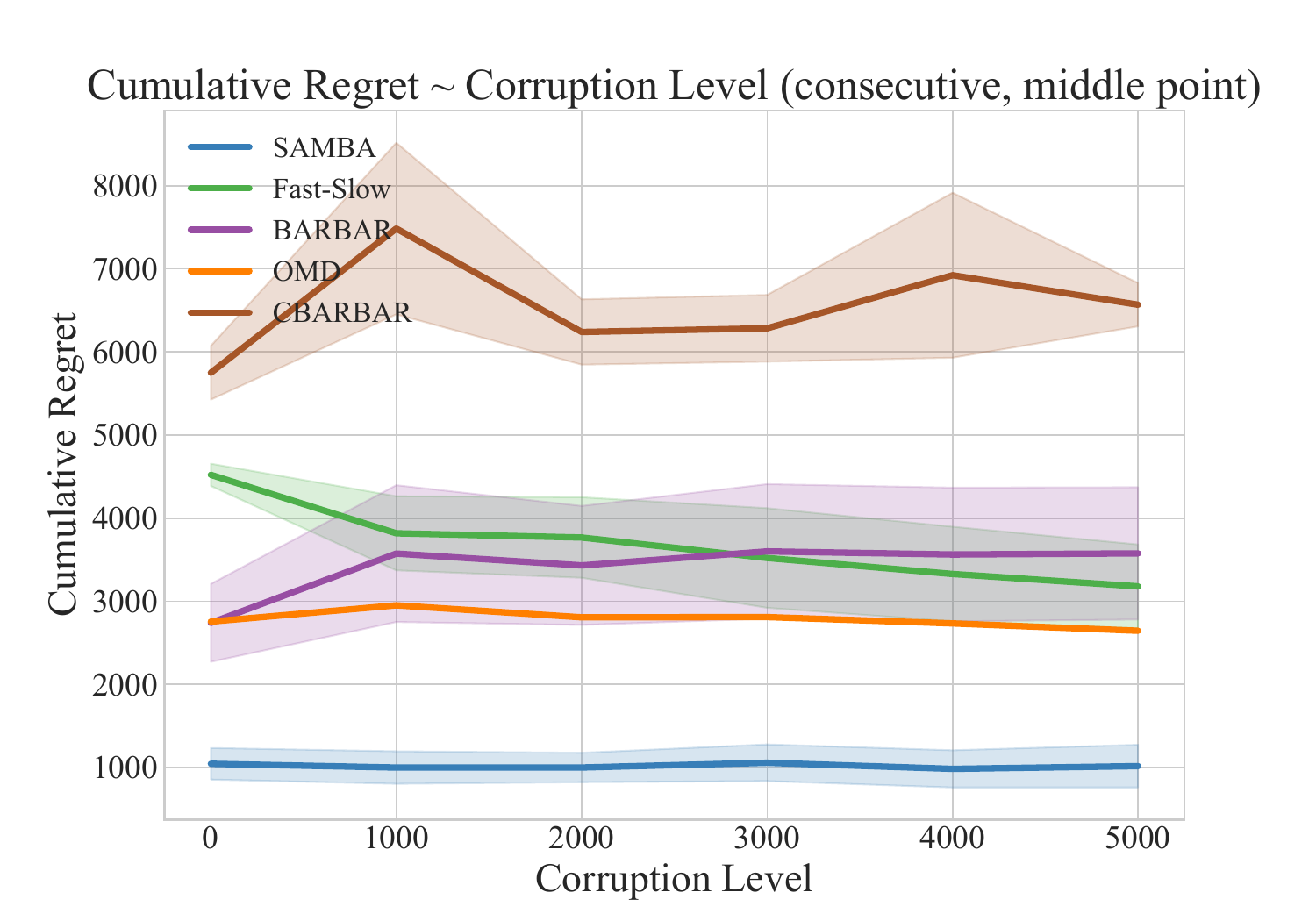}
    \caption{Corruption scheme 3: consecutive, in the middle.}
    \label{fig:third}
\end{subfigure}
\begin{subfigure}{0.49\linewidth}
    \centering
    \includegraphics[width=0.9\textwidth]{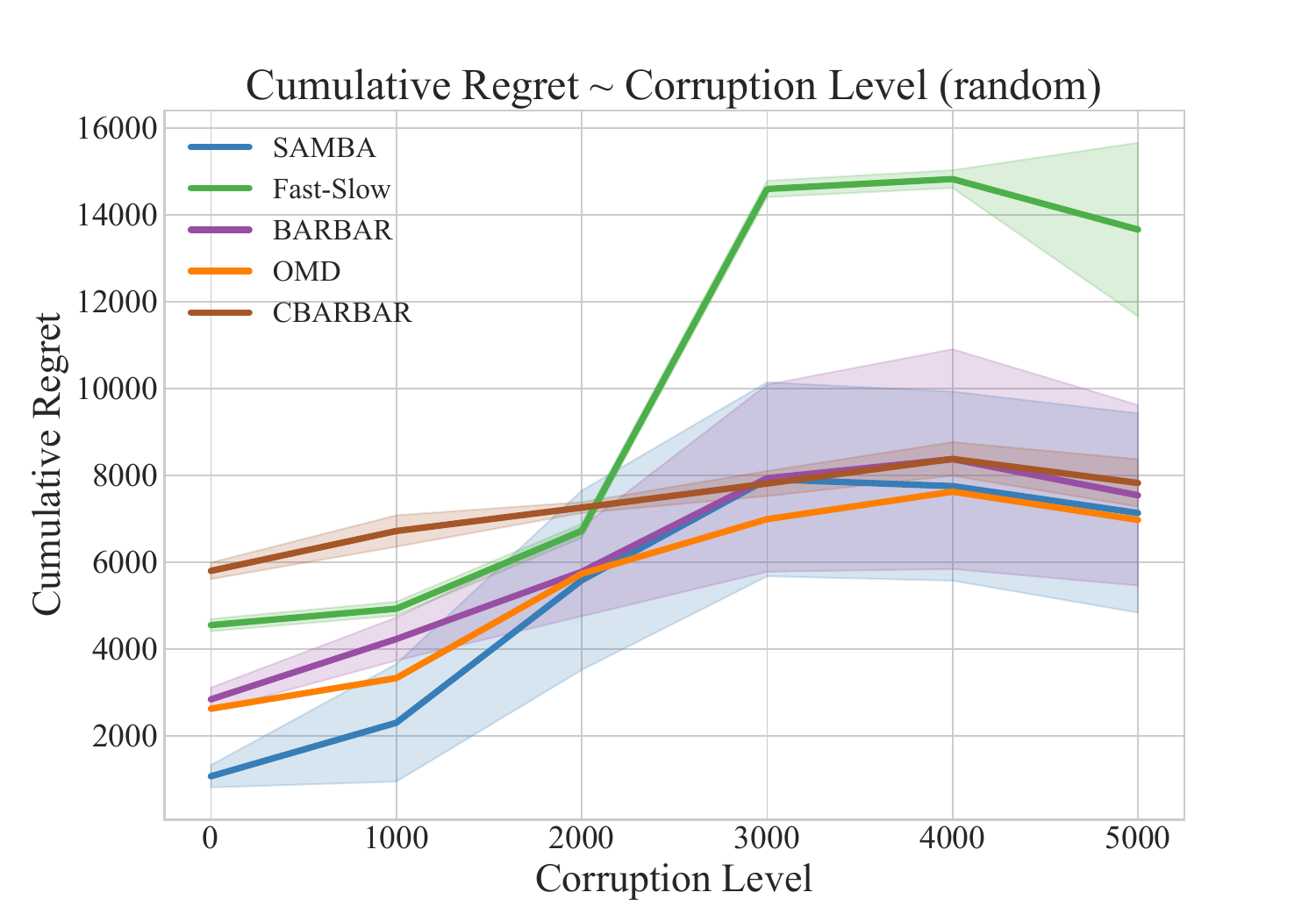}
    \caption{Corruption scheme 4: at random steps. }
    \label{fig:fourth}
\end{subfigure}
\caption{Comparison of different algorithms: the cumulative regrets under different corruption levels and different corruption schemes. 
SAMBA achieves the lowest cumulative regret in most settings, particularly outperforming baselines when \( C=0 \), demonstrating its \( O(\log T) \) regret versus \( O(\log^2 T) \) for others. However, as corruption \( C \) increases, SAMBA's advantage diminishes, consistent with its regret bound of \( O(C + \log T) \), while OMD shows worse performance due to its high complexity and large constant factors.}
\label{fig:regretCorr}
\end{figure*}

If there is a large corruption in only one step, say step $t_0$ with corruption level $c(t_0)>\Delta/4$, then $\mathbb{E}[q_{a^*}]$ may increase after $t_0$ and subsequently gradually decrease, as shown in Figure~\ref{fig:embedded3-1}. 
What we want to do is to upper bound the expected number of steps during the recovery process after corruption $c(t_0)$ (colored magenta in Figure~\ref{fig:embedded3-1}). 
Here we use the optional stopping theorem to give such a bound. 
Let $\phi=\min\{t> t_0:q_{a^*}(t)\leq q_{a^*}(t_0)\}$. When $t_0<t\leq \phi$, it holds that $q_{a^*}(t)\geq q_{a^*}(t_0)$. Then, 
\[\mathbb{E}[q_{a^*}(t+1)|H(t)]-q_{a^*}(t)\leq-\frac{\alpha \Delta}{2K}q_{a^*}(t)^2\leq -\frac{\alpha \Delta}{2K}q_{a^*}(t_0)^2.\] 
Thus, $\{q_{a^*}(t)|t> t_0\}$ is a supermartingale. From the optional stopping theorem, 
\begin{align*}
    &\ \mathbb{E}[q_{a^*}(\phi\wedge t)]+\frac{\alpha \Delta}{2K}q_{a^*}(t_0)^2\mathbb{E}[\phi\wedge t]\\
\leq&\ \mathbb{E}[q_{a^*}(\phi\wedge (t_0+1))]+\frac{\alpha \Delta}{2K}q_{a^*}(t_0)^2\mathbb{E}[\phi\wedge (t_0+1)].
\end{align*}
Here, $\wedge$ denotes the pairwise minimum. 
Then, applying the monotone converge theorem, we get 
\begin{align}
&\ \mathbb{E}[\phi-t_0-1]\leq \lim_{t\rightarrow \infty}\mathbb{E}[\phi\wedge t]-\mathbb{E}[\phi\wedge (t_0+1)]\notag\\
\leq&\ \frac{2K}{\alpha\Delta q_{a^*}(t_0)^2}(\mathbb{E}[q_{a^*}(t_0+1)]-\mathbb{E}[q_{a^*}(\phi)]).\label{eq:largepInduction2}
\end{align}
Therefore,
\begin{align}
\mathbb{E}[\phi-t_0]\leq& \frac{2K}{\alpha\Delta q_{a^*}(t_0)^2}\Big((2c(t)-\Delta)\frac{\alpha}{K}q_{a^*}(t_0)^2+\frac{\alpha \Delta}{2K}q_{a^*}(t_0)^2\Big)+1 \notag \\
=&\frac{4c(t)}{\Delta}. \label{eq:largepInduction}
\end{align}

If there are consecutive corruptions (other corruptions come before recovery from the previous corruption), then the total extra regret incurred by these corruptions is upper bounded by the regret calculated by considering these corruption steps separately from ``inner'' corruptions to ``outer'' corruptions. 
Here we use Figure~\ref{fig:consecutive1-1} as an example. Corruptions are made at the time steps $t_1, t_2, t_3$. We first deal with $c(t_3)$, then $c(t_2)$, and finally $c(t_1)$. The length of the recovery process for $c(t_3)$ (colored purple) can be bounded directly by the previous derivation. After dealing with $(t_3,t_3']$, we can remove this interval and combine the rest together. Then, we consider the interval $(t_2,t_3]\cup (t_3', t_2']$ as a whole and apply the optional stopping theorem to it, which holds because $q_{a^*}(t_3')\leq q_{a^*}(t_3)$. The same analysis holds for $c(t_1)$ (details can be found in Appendix~\ref{apdx:proofMainThm}). 
In this way, we can upper bound the expected number of total recovery steps needed by $\sum_{t=0}^{T-1}\frac{4c(t)}{\Delta}=\frac{4C}{\Delta}$.

From the above analysis, we know that in both cases ($p_{a^*} < 1/2$ or $p_{a^*}\geq 1/2$), the influence of corruption $C$ would be counteracted by $O(C)$ time steps, leading to an additional regret of $O(C)$. Therefore, along with Fact \ref{fact_1}, we can get the final regret upper bound as $O(K\log T + C)$. The formal proofs can be found in Appendix~\ref{apdx:proofMainThm}. 

\begin{remark}
    In BARBAR (and CBARBAR), the algorithm is divided into $\log T$ phases (the length of each phase keeps doubling). To ensure that the algorithm is robust against corruptions, any arm should be pulled $O(\log T)$ times in each phase (so that the empirical mean is accurate enough) to detect the corruptions. This leads to a $O(\log^2 T)$ regret even when there is no corruption. 
    In SAMBA, the algorithm and analysis are based on expectations but not accurate empirical means. 
    Thus, we do not require pulling each arm $O(\log T)$ times in each phase to detect the influence of corruption, and instead, only a constant number of pulls in each phase is enough. In this way, we reduce one $\log T$ factor in the regret upper bound (note that when there is no corruption, every arm is pulled $\Theta(\log T)$ times, which is enough to guarantee good performance). 
\end{remark}

\section{Simulation}\label{sec:simulation}

We then conduct experiments to compare the empirical performance of SAMBA with four baseline algorithms. 
We set the parameters to $T=100,000$, $K=9$ and the 9 arms are of mean rewards $0.1, 0.2, \ldots, 0.9$ respectively, $\alpha=0.05$ in SAMBA, and $\delta=1/T$ in Fast-Slow AAE. 
We test with five different corruption levels $C = 1000, 2000, \ldots, 5000$ and on four different corruption schemes:

\begin{enumerate}
\setlength{\itemsep}{1pt}
\setlength{\parsep}{0pt}
\setlength{\parskip}{0pt}
    \item All corruption added at the beginning consecutively, i.e., at steps $0, 1, 2, 3, \ldots$;
    \item All corruption added at the the even steps at the beginning, i.e., at steps $0, 2, 4, 6, \ldots$; 
    \item All corruption added concentratedly in the middle, i.e., at steps $T/4, T/4 + 1, T/4+2, \ldots$; 
    \item All corruptions added at random steps among the first $T/10=10,000$ steps. 
\end{enumerate}

\begin{figure}[tbh]
\begin{subfigure}{0.9\linewidth}\center
    \includegraphics[width=\textwidth]{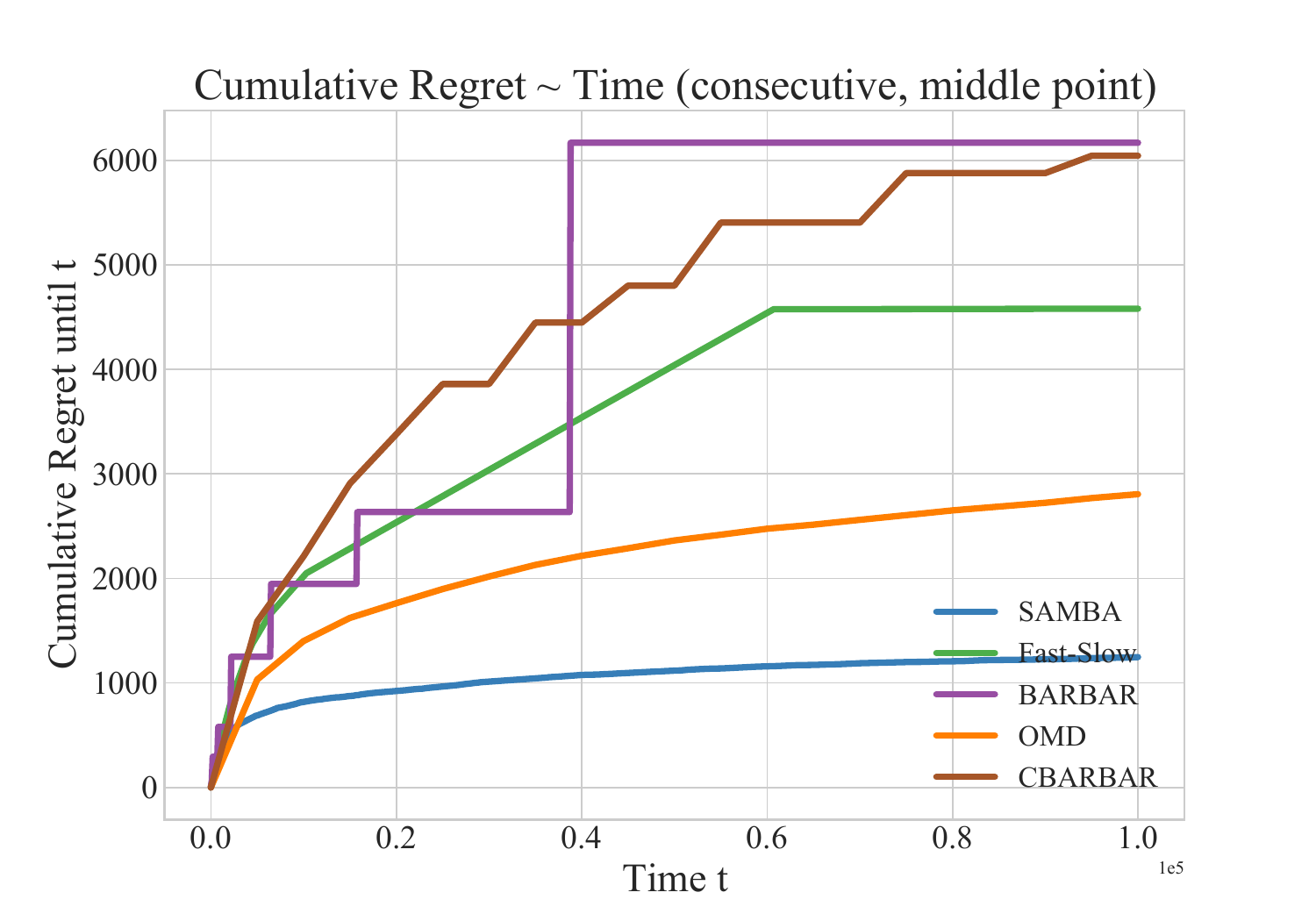}
    \caption{Corruption scheme 3: consecutive, in the middle.}
    \label{fig:first2}
\end{subfigure}
\begin{subfigure}{0.9\linewidth}\center
    \includegraphics[width=\textwidth]{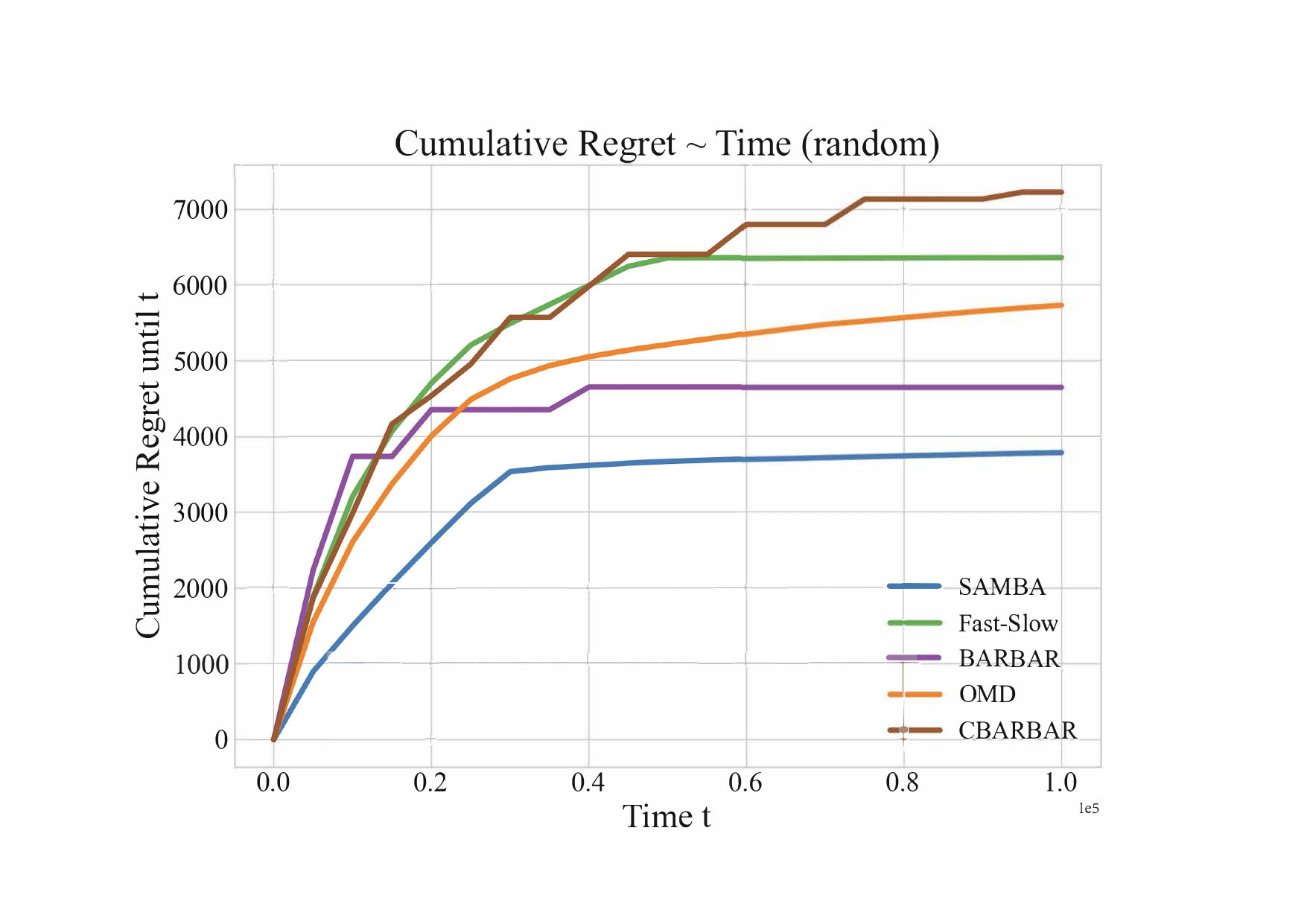}
    \caption{Corruption scheme 4: at random steps.}
    \label{fig:second2}
\end{subfigure}
\caption{Comparison of different algorithms: the trend of their cumulative regret with the time when $C=2000$ under corruption schemes 3 and 4.}
\label{fig:regretTime}
\end{figure}

\begin{table}[tbh] 
    \centering
    \caption{The average single-run time (in second) and standard deviation (SD) of different algorithms. }\label{table:time}
    \begin{footnotesize}
    \begin{tabular}{|c|c|c|c|c|c|}
        \hline
        & SAMBA & Fast-Slow & BARBAR & CBARBAR & OMD \\
        \hline
        Time&2.2594  &  1.2942 & 0.7401  & 0.7823 & 1733.3\\
        \hline
        SD &  0.01422  &  0.01057 &0.00701& 0.00684 & 10.903 \\
        \hline
    \end{tabular}
    \end{footnotesize}
\end{table} 
First, we compare the time costs of these corrupted bandits algorithms, and the results are shown in Table~\ref{table:time}. 
We can see that the combinatorial algorithms have a much lower time cost than the OMD methods, e.g., SAMBA runs more than 500x faster than OMD. This indicates the efficiency of our algorithm, i.e., it is a \emph{combinatorial} algorithm with asymptotically optimal regret upper bound.

Then, we consider the cumulative regret under different corruption levels. 
The experiment result is shown in Figure~\ref{fig:regretCorr}. 
It shows the mean and standard deviation of the cumulative regret for the four algorithms under different settings. Each experiment runs for 100 times, except the one on the OMD algorithm which runs very slow due to its requirement of solving an optimization problem in each step. 
We can see that SAMBA performs the best in terms of cumulative regret in most settings.
Specifically, when $C=0$, SAMBA outperforms the baselines, which demonstrates SAMBA's $O(\log T)$ regret advantage over other algorithms' $O(\log^2 T)$ regret. 
However, it seems that SAMBA's performance advantage over baseline algorithms decreases as the corruption level $C$ increases. This actually matches SAMBA's regret bound of $O(C+\log T)$. When $C$ is large, the regret is determined primarily by $C$ rather than the $\log T$ term. 
As for OMD, it has a much higher time complexity, and performs worse than SAMBA when the corruption level is small, because some large constant factors appear in the regret upper bound. 

In addition, we compare the cumulative regret of different algorithms over time. The curves for two corruption schemes and corruption level $C=2,000$ are shown in Figure~\ref{fig:regretTime}. Here, BARBAR and CBARBAR are implemented by selecting $n_a(m)$ times of arm $a$ in phase $m$, where $n_a(m)$ is predetermined before phase $m$. Thus, the non-optimal arms are sampled together, leading to a step-like curve. 
In Figure~\ref{fig:first2}, the consecutive corruptions in the middle incur a regret surge (a large number of non-optimal arms selected after the concentrated corruptions) for BARBAR, while SAMBA actually converges quickly and tolerates the abrupt corruptions in the middle well. 

\balance

We also conduct experiments with varying numbers of arms to compare the performance of different algorithms. The tested values of $K$ (number of arms) are 6, 8, 10, 15, 20, and 30. The mean reward for each arm is uniformly distributed in the range $[0,1]$. The mean cumulative regrets (with $T=100,000$) are summarized in Table~\ref{table:diffNumberArms}. The corruption level is set to $C=3,000$, with corruptions concentrated in the middle of the time horizon (corruption scheme 3).

\begin{table}[tbh]
    \centering
    \caption{Comparison of the mean cumulative regret under different algorithms and different number of arms (with corruption level 3000 and corruption scheme 3)}
    \label{table:diffNumberArms}
    \begin{footnotesize}
    \begin{tabular}{|c|c|c|c|c|c|c|}
        \hline
        \diagbox[width=7.5em]{Algorithm}{\\ $K$} & 6 & 8 & 10 & 15 & 20 & 30 \\ \hline  
        SAMBA & 629.9 & 884.2 & 1054.8 & 1722.7 & 2947.4 & 3534.4\\ \hline
        Fast-Slow & 1473.7 & 2265.3 & 3519.8 & 7955.8 & 10269.1 & 12661.2\\ \hline
        BARBAR & 2010.5 & 8813.9 & 3599.2 & 4800.7 & 8675.4 & 10695.7\\ \hline
        CBARBAR & 4189.2 & 6901.2 & 6285.2 & 11874.3 & 12668.2 & 14644.0\\ \hline
        OMD & 2038.3 & 2839.5 & 3322.7 & 4575.6 & 8460.1 & 10189.2\\ \hline
    \end{tabular}
    \end{footnotesize}
\end{table}

The experimental results indicate that as the number of arms $K$ increases, the cumulative regrets scale approximately linearly with $K$, aligning well with the theoretical bounds. In all cases, SAMBA achieves the lowest mean cumulative regret, consistently outperforming the other algorithms.

\section{Conclusion and Future Directions}

In this paper, we apply a policy gradient algorithm SAMBA to the stochastic multi-armed bandits problem with adversarial corruptions.
Our analysis is the first result of a combinatorial algorithm that achieves an asymptotically optimal regret upper bound of $O(C+\log T)$, establishing our method as the state-of-the-art in the corrupted bandits setting. 
We have also conducted simulations, demonstrating that SAMBA outperforms existing baselines. 

There are several directions for future work. 
For example, it would be interesting to generalize SAMBA as well as our analysis to the combinatorial bandit setting or linear bandit setting, and it would also be valuable to validate the algorithm's performance in real-world applications, e.g., to conduct experiments on actual systems or design large-scale simulations that capture realistic complexities.

\begin{acks}
The work of Siwei Wang is supported in part by the National Natural Science Foundation of China Grant 62106122. The work of Zhixuan Fang is supported by Tsinghua University Dushi Program and Shanghai Qi Zhi Institute Innovation Program SQZ202312. 
\end{acks}

    


\bibliographystyle{ACM-Reference-Format} 
\bibliography{main}

\newpage

\clearpage

\appendix

\section{Proof of Theorem~\ref{thm:main}}\label{apdx:proofMainThm}

In the following proofs, we use notations $\vee, \wedge$ to denote the pairwise maximum and minimum, respectively. 
For $c(t)> \Delta/4$, we call it a large corruption; otherwise ($c(t)\leq \Delta/4$), it is a small corruption. 

\begin{definition}[Recovery process]
    The recovery process of a corruption at time $t_c$ is an embedded chain of $\{q_{a^*}(t)\}$ with time interval $t\in[t_c+1, t_c']$ where $t_c'$ is the first time step satisfying $t\geq t_c+1$ and $q_{a^*}(t)\leq q_{a^*}(t_c)$. 
\end{definition}
After the recovery process, the negative influence of corruption $c(t_0)$ on $q_{a^*}$ will be eliminated. 
See Figure~\ref{fig:embedded3} as an example, the red line segment represents the influence after the corruption step $t_0$, the magenta line segments represent the gradual recovery from corruption. The red dashed line represents a horizontal level; the recovery process completes when a magenta line segment intersects the red dashed line, or equivalently, after such an intersection, $q_{a^*}(t_0')$ will be no larger than $q_{a^*}(t_0)$. 

Denote the steps with large corruption as $t^{(LC)}_1, t^{(LC)}_2, \ldots, t^{(LC)}_w$. 
Define $S_{LC}=\cup_{i}(t^{(LC)}_i,t'^{(LC)}_i]$, where $t^{(LC)}_i$'s are the steps with large corruptions and $(t^{(LC)}_i,t'^{(LC)}_i]$ is the recovery process after the corruption at $t^{(LC)}_i$.

Next, we give the definition of an embedded chain. 

\begin{definition}[Embedded Chain]
An embedded chain $\hat{q}(s)$ contains a subset of a process $q(t)$, and relabels the selected elements of the original chain into a consecutive time series. Specifically, $\{\hat{q}(s)|s=0,1,\ldots\} = \{q(t)|t\in \mathbb{T}\}$ for some index set $\mathbb{T}\subset \mathbb{N}$. 
\end{definition}

\begin{figure*}[bh]
\centering
\includegraphics[width=0.8\linewidth]{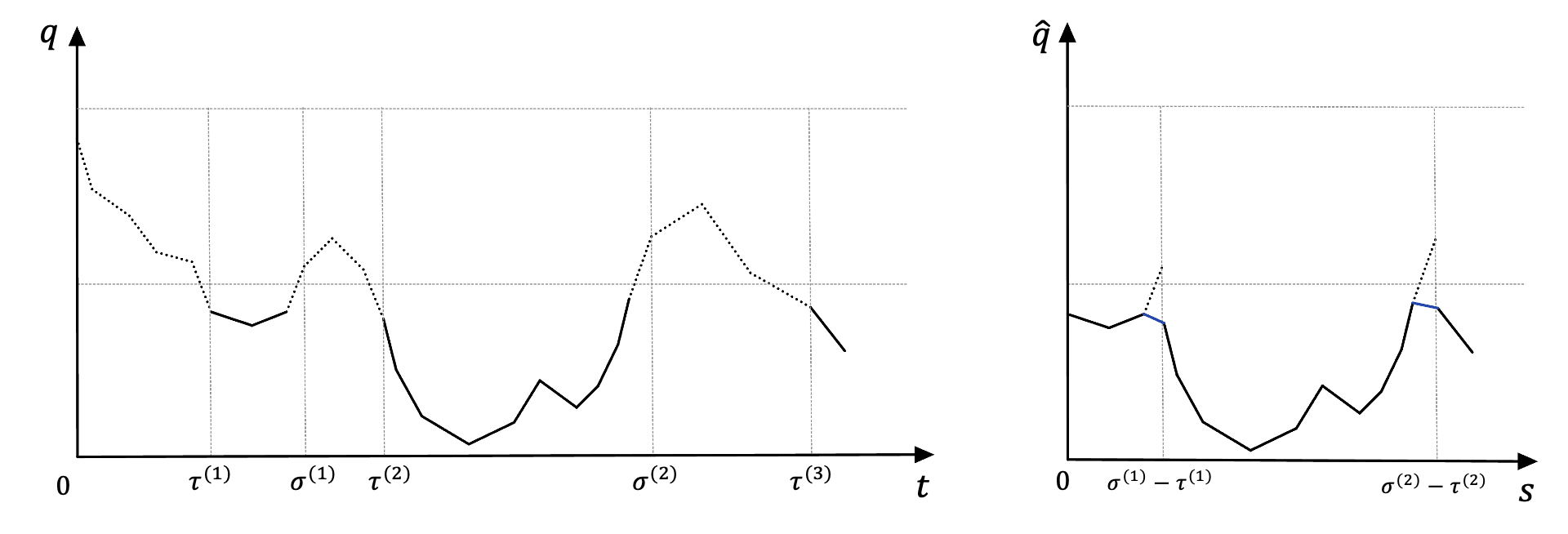}
\caption{An illustration of an embedded chain. }
\label{fig:embedded1}
\end{figure*}

See Figure~\ref{fig:embedded1} as an illustration of an embedded chain, where the left is the original process and the right is an embedded chain consisting of all points with values lower than a certain constant (depicted by the middle horizontal dashed line). This embedded chain is relabeled as a process with time series starting from 0.

Define stopping times $\tau^{(0)}=0,\sigma^{(0)}=0$, and for $k\geq 1, k\in\mathbb{Z}$,
\[\tau^{(k)}=\min\Big\{t\geq \tau^{(k)}:q_{a^*}(t)< \frac 1 2, t\notin S_{LC}\Big\},\]
\[\sigma^{(k)}=\min\Big\{t\geq \sigma^{(k-1)}:q_{a^*}(t)\geq \frac 1 2, t\notin S_{LC}\Big\}.\]

As in the left figure of Figure~\ref{fig:embedded2}, we let 
\[\tau_s=\min\Big\{t>\sigma_s:q_{a^*}(t+t_s)<\frac 1 2, t\notin S_{LC}\Big\},\]
\[\sigma_s=\min\Big\{t>0:q_{a^*}(t+t_s)\geq \frac 1 2, t\notin S_{LC}\Big\}. \]

Specifically, we define $\tau^{(k)}=\infty$ if there is no such $t$ exists for $k$, similarly for $\sigma^{(k)}=\infty$. 

Then, we prove the main Theorem~\ref{thm:main} considering the following cases.

\subsection{When $p_{a^*}(t)\geq 1/2$}
When $p_{a^*}(t)\geq 1/2$, i.e., $q_{a^*}(t)\leq 1/2$, we divide the process into two cases. 

\paragraph{Case 1.} $p_{a^*}(t)\geq 1/2$ and $t\notin S_{LC}$.

\begin{lemma}\label{thm:smallCorr1}
    Define the process $\hat q(s):s\in \mathbb{Z}^+$ to be the embedded chain from $q_{a^*}(t)$ with $t$ satisfying $q_{a^*}(t)\leq 1/2$ and $t\notin S_{LC}$, then 
    \begin{enumerate}
        \item The process $\hat q(s)$ is a positive supermartingale satisfying $\mathbb{E}[\hat q(s+1)|\hat{H}(s)]-\hat{q}(s)\leq -\frac{\alpha\Delta}{2K}\hat{q}(s)^2$.
        \item With probability one, $\hat q(s)\rightarrow 0$ as $s\rightarrow +\infty$. 
        \item $\mathbb{E}[\hat q(s)]\leq \frac{2K}{4K+\alpha s \Delta}$. 
    \end{enumerate}
\end{lemma}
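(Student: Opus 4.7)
The plan is to prove the three claims in order, with Part 1 being the core drift estimate and Parts 2--3 following from it by standard martingale techniques.

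For \textbf{Part 1}, I first derive a one-step drift on the original process at an admissible time $t$. Since $q_{a^*}(t)\leq 1/2$ forces $a_l=a^*$, the SAMBA update for each $a\ne a^*$ gives $\mathbb{E}[p_a(t+1)-p_a(t)\mid H(t)]\leq \alpha(2c(t)-\Delta)p_a(t)^2$; moreover $t\notin S_{LC}$ forces $c(t)\leq \Delta/4$ (small corruption), so $2c(t)-\Delta\leq -\Delta/2$. Summing over $a\ne a^*$ and applying Jensen's inequality $\sum_{a\ne a^*}p_a(t)^2\geq q_{a^*}(t)^2/K$ yields
\[
\mathbb{E}[q_{a^*}(t+1)\mid H(t)]-q_{a^*}(t)\leq -\tfrac{\alpha\Delta}{2K}\,q_{a^*}(t)^2
\]
whenever $t$ and $t+1$ are both admissible. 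When the embedded-chain successors $T_s,T_{s+1}$ are not consecutive, the skipped steps lie either in $S_{LC}$ (a large-corruption recovery interval) or in the region $\{q_{a^*}>1/2\}$. The former is handled via the recovery-process property $q_{a^*}(t_c')\leq q_{a^*}(t_c)$, which ensures the detour cannot raise $q_{a^*}$; the latter is handled using the excursion analysis on $\mathbb{E}[p_{a^*}^{-1}]$ from Section~3.1. In both cases, combining the cross-detour bound with one clean admissible step at the re-entry recovers the desired quadratic drift in $\hat{q}(s)$.

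For \textbf{Part 2}, since $\hat{q}(s)\in[0,1/2]$ is a non-negative supermartingale by Part 1, Doob's convergence theorem gives $\hat{q}(s)\to \hat{q}_\infty$ almost surely. Taking expectations in the Part~1 inequality and telescoping yields $\sum_{s\ge 0}\mathbb{E}[\hat{q}(s)^2]\leq \frac{2K}{\alpha\Delta}\mathbb{E}[\hat{q}(0)]<\infty$, so $\mathbb{E}[\hat{q}(s)^2]\to 0$ and hence $\hat{q}(s)\to 0$ in probability; coupled with the almost sure convergence, this forces $\hat{q}_\infty=0$ almost surely. For \textbf{Part 3}, let $u(s):=\mathbb{E}[\hat{q}(s)]$. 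Taking expectations in Part 1 and applying Jensen's convex bound $\mathbb{E}[\hat{q}(s)^2]\geq u(s)^2$ yields the deterministic recursion $u(s+1)\leq u(s)-\tfrac{\alpha\Delta}{2K}u(s)^2$, with $u(0)\leq 1/2=\tfrac{2K}{4K}$. I would induct on $s$: the map $x\mapsto x-\tfrac{\alpha\Delta}{2K}x^2$ is increasing on $[0,K/(\alpha\Delta)]$, which safely contains the range of $u$, so it suffices to check that the closed form $\tfrac{2K}{4K+\alpha s\Delta}$ is preserved by the recursion; clearing denominators reduces this to $(4K+\alpha(s-1)\Delta)(4K+\alpha(s+1)\Delta)\leq (4K+\alpha s\Delta)^2$, i.e.\ $-(\alpha\Delta)^2\leq 0$, which is immediate.

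The main obstacle is the lifting step in Part 1: the recovery-process definition only guarantees $q_{a^*}(t_c')\leq q_{a^*}(t_c)$ (no quadratic drift), and the $\{q_{a^*}>1/2\}$ excursions must be quantitatively controlled using the Section~3.1 estimates. To recover the exact quadratic drift $-\tfrac{\alpha\Delta}{2K}\hat{q}(s)^2$ rather than merely a plain supermartingale bound, the cross-detour inequality must be fused with at least one additional clean admissible step (e.g.\ at $t_c'+1$), invoking the Markov property at the re-entry point to the admissible set.
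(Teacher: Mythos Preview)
Your Parts~2 and~3 are correct and close to the paper's (your Part~2 argument via $\sum_s\mathbb{E}[\hat q(s)^2]<\infty$ is a valid alternative to the paper's optional-stopping on the hitting times $\phi_m=\min\{s:\hat q(s)<1/m\}$; Part~3 is the same recursion, which the paper packages as a separate telescoping lemma).

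In Part~1, however, you are manufacturing a difficulty that is not there. You propose to control excursions of the original process into $\{q_{a^*}>1/2\}$ via the $p_{a^*}^{-1}$ drift of Section~3.1, and you flag this as ``the main obstacle.'' The paper bypasses this entirely. With $t_s$ the original time underlying $\hat q(s)$ and $c(t_s)\leq\Delta/4$, the one-step SAMBA update already gives $\mathbb{E}[q_{a^*}(t_s+1)\mid H(t_s)]-q_{a^*}(t_s)\leq -\tfrac{\alpha\Delta}{2K}q_{a^*}(t_s)^2$, exactly as you compute. The only thing left is the \emph{pointwise} inequality $\hat q(s+1)\leq q_{a^*}(t_s+1)$, and this is immediate: if $q_{a^*}(t_s+1)<\tfrac12$, then $t_s+1$ is itself the next embedded-chain time (one checks $t_s+1\notin S_{LC}$ whenever $t_s\notin S_{LC}$ and $c(t_s)\leq\Delta/4$) and equality holds; if $q_{a^*}(t_s+1)\geq\tfrac12$, then trivially $\hat q(s+1)<\tfrac12\leq q_{a^*}(t_s+1)$ since $\hat q(s+1)$ lies in the embedded chain by definition. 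No quantitative excursion control is needed---the quadratic drift is harvested entirely at the single step $t_s\to t_s+1$, and any subsequent detour can only make $\hat q(s+1)$ smaller. Invoking the Section~3.1 analysis here is both unnecessary and awkward: that argument tracks $\mathbb{E}[p_{a^*}^{-1}]$ when $a^*$ is not the leader, which neither directly bounds $q_{a^*}$ at re-entry nor covers the sub-case where $q_{a^*}>\tfrac12$ yet $a^*$ remains the leading arm.
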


Lemma~\ref{thm:smallCorr1} states the convergence of an embedded chain $\{\hat q(s)\}$ from the original process $\{q_{a^*}(t)\}$ and proves its convergence rate. The embedded chain includes the time steps with $q_{a^*}(t)\leq 1/2$ and $t\notin S_{LC}$, which is shown in the right figure of Figure~\ref{fig:embedded2}. 
Here, we examine only the steps satisfying $t\notin S_{LC}$, i.e., the steps not in the recovery process of any large corruptions $c(t_{LC})$.

\begin{proof}
    
    \emph{Property 1.} If there is a small corruption $c(t_s)(\leq \Delta/4)$ at step $t_s$, by the definition of the embedded chain, $\hat q(s)=q_{a^*}(t_s)$ as well as the fact that 
    $$ 
    \hat q(s+1)\leq \left\{
    \begin{aligned}
    q_{a^*}(t_s+1) & \ \ \text{ if }\  q_{a^*}(t_s+1)<\frac 1 2,\\
    q_{a^*}(t_s+\tau_s) & \ \ \text{ if }\ q_{a^*}(t_s+1)\geq\frac 1 2.
    \end{aligned}
    \right.
    $$
    Here, the inequality is because, for the removed recovery process $(t_c,t_c']$ of a large corruption, we must have $q_{a^*}(t_c')\leq q_{a^*}(t_c)$.
    
    Since $q_{a^*}(t_s+\tau_s)<\frac 1 2$, we know $\hat q (s+1)\leq q_{a^*}(t_s+1)$. 

    From the fact that the optimal arm $a^*$ is the leading arm at time $t_s$, we have that $\forall a\neq a^*$, 
    \[p_a(t_s+1)=p_a(t_s)+\alpha p_a(t_s)^2\Big[\frac{I_a(t_s) R_a(t_s)}{p_a(t_s)}-\frac{I_{a^*}(t_s) R_{a^*}(t_s)}{p_{a^*}(t_s)}\Big],\]
    where $ R$ is the corrupted reward. Thus, in expectation, 
    \begin{align*}
    &\mathbb{E}[p_a(t_s+1)-p_a(t_s)|H(t_s)]\\
    \leq& \alpha p_a(t_s)^2\big((r_a+\Delta/4)-(r_{a^*}-\Delta/4)\big)\\
    \leq& -\frac{\alpha\Delta}{2} p_a(t_s)^2.
    \end{align*}
    Because $q_{a^*}=1-p_{a^*}=\sum_{a:a\neq a^*}p_a$, 
    \[\mathbb{E}[q_{a^*}(t_s+1)-q_{a^*}(t_s)|H(t_s)]\leq -\frac{\Delta}{2}\sum_{a:a\neq a^*}\alpha p_a(t_s)^2.\]
    Thus, 
    \begin{align}
        \mathbb{E}[\hat q(s+1)|H(t_s)]-\hat q(s)&\leq \mathbb{E}[\hat q(s+1)-\hat q(s)|H(t_s)]\notag\\
        &\leq -\frac{\Delta}{2}\sum_{a:a\neq a^*}\alpha p_a(t_s)^2\notag\\
        &\leq -\frac{\alpha \Delta}{2K}\hat q(s)^2 \label{eq:super1}
    \end{align}
    where the last inequality holds because
    \begin{eqnarray}
        \sum_{a:a\neq a^*}p_a(t_s)^2&=& (K-1)\sum_{a:a\neq a^*}\frac{p_a(t_s)^2}{K-1}\\
        &\geq& (n-1)\Big(\sum_{a:a\neq a^*}\frac{p_a(t_s)}{K-1}\Big)^2\\
        &=& \frac{q_{a^*}(t_s)^2}{K-1}\\
        &\geq& \frac{\hat q(s)^2}{K}.\label{eq:jensen1}
    \end{eqnarray}
    Therefore, $\hat q(s)$ is a supermartingale. 

    \emph{Property 2.} Because $\hat q(s)$ is positive, from Doob's supermartigale convergence theorem, the limit $\lim_{s\rightarrow \infty}\hat q(s)$ exists. Next, we prove this limit is zero. It is sufficient that we prove $\lim\inf_{s\rightarrow \infty}\hat q(s)=0$. 
    
    We let $\phi_m=\min\{s\geq 1:\hat q(s)<1/m\}$. We first show that $\phi_m<\infty$. 
    From equation~(\ref{eq:super1}), the following holds when $\hat q(s)\geq \frac 1 m$, 
    \begin{equation}
        \mathbb{E}[\hat q(s+1)|H(t_s)]-\hat q(s)\leq-\frac{\alpha \Delta}{2K}\hat q(s)^2\leq -\frac{\alpha \Delta}{2K}\frac{1}{m^2}.  
    \end{equation}
    From optional stopping theorem, 
    \[\mathbb{E}[\hat q(\phi_m\wedge s)]+\frac{\alpha \Delta}{2K}\frac{1}{m^2}\mathbb{E}[\phi_m\wedge s]\leq \mathbb{E}[\hat q(0)].\]
    Then, applying the monotone converge theorem, we get, 
    \[\mathbb{E}[\phi_m]\leq \lim_{s\rightarrow \infty}\mathbb{E}[\phi_m\wedge s]\leq \frac{2Km^2}{2\alpha\Delta}\mathbb{E}[\hat q(0)]<\infty. \]
    Therefore, $\phi_m<\infty$ with probability 1. Then, we can define a sequence of stopping times $\psi_m=\min\{s\geq \psi_{m-1}:\hat q(s)<\frac 1 m\}$, each of which is finite w.p. 1 and $\hat q(\psi_m)\rightarrow 0$, which gives $\lim\inf_{s\rightarrow \infty}\hat q(s)=0$. 
    Therefore, $\lim_{s\rightarrow \infty}\hat q(s)=0$. 
    
    \emph{Property 3.} Taking expectation on both sides of equation~(\ref{eq:super1}), applying Lemma~\ref{lemma:bound1} (as follows) and using the fact that $\mathbb{E}[\hat q(0)]\leq\frac 1 2$, we get \[\mathbb{E}[\hat q(s)]\leq \frac{\mathbb{E}[\hat q(0)]}{1+\frac{\alpha\Delta}{2K}\mathbb{E}[\hat q(0)]s}\leq \frac{2K}{4K+\alpha\Delta s}. \]
    
\end{proof}
\begin{lemma}\label{lemma:bound1}
If $(a_t:t\in \mathbb{N})$ is a sequence of positive real numbers satisfying 
\[a_{t+1}\leq a_t-\gamma a_t^2\label{eq:lammabound1-1}\]
for some $\gamma>0$, then $\forall T\in\mathbb{N}$, 
\[a_T\leq \frac{a_0}{1+\gamma T a_0}.\]
\end{lemma}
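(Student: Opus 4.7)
The plan is to reduce the inequality to a linear one by passing to reciprocals. Define $b_t := 1/a_t$; since the sequence is positive, this is well-defined and positive. The target bound $a_T \le a_0/(1+\gamma T a_0)$ is equivalent to $b_T \ge b_0 + \gamma T$, so it suffices to prove the one-step inequality $b_{t+1} \ge b_t + \gamma$ and then iterate.

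First I would factor the hypothesis as $a_{t+1} \le a_t(1-\gamma a_t)$. Before taking reciprocals I need to know that the right-hand side is strictly positive; this follows from the positivity assumption on $(a_t)$, because $a_{t+1} > 0$ forces $a_t(1-\gamma a_t) > 0$, and since $a_t > 0$ we conclude $\gamma a_t < 1$ for every $t$. (If one were worried about this, it could alternatively be handled by assuming WLOG that $\gamma a_0 \le 1$ and showing by induction that $\gamma a_t$ stays below $1$; but the stated positivity already gives it for free.)

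Once $0 < \gamma a_t < 1$ is in hand, taking reciprocals flips the inequality:
\[
\frac{1}{a_{t+1}} \;\ge\; \frac{1}{a_t(1-\gamma a_t)} \;=\; \frac{1}{a_t}\cdot\frac{1}{1-\gamma a_t}.
\]
Applying the elementary bound $\frac{1}{1-x} \ge 1+x$ valid for $x \in [0,1)$ with $x = \gamma a_t$, the right-hand side is at least $\frac{1}{a_t}(1+\gamma a_t) = \frac{1}{a_t} + \gamma$. Thus $b_{t+1} \ge b_t + \gamma$, and a straightforward induction on $t$ yields $b_T \ge b_0 + \gamma T$, i.e.\ $\frac{1}{a_T} \ge \frac{1}{a_0} + \gamma T = \frac{1+\gamma T a_0}{a_0}$, which is precisely $a_T \le \frac{a_0}{1+\gamma T a_0}$.

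There is no real obstacle here; the only thing to be careful about is the positivity of $1-\gamma a_t$ needed to pass to reciprocals without flipping signs in the wrong way, and as noted this is forced by the hypothesis that $(a_t)$ remains positive. The argument is standard and essentially a discrete analogue of solving the ODE $\dot a = -\gamma a^2$, whose solution is exactly $a(t) = a_0/(1+\gamma t a_0)$ — the form of the bound in the conclusion.
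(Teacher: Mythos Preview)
Your proof is correct and follows essentially the same approach as the paper: both arguments pass to reciprocals $b_t = 1/a_t$, establish the one-step increment $b_{t+1} \ge b_t + \gamma$, and telescope. The only cosmetic difference is that the paper reaches the increment via $\frac{1}{a_t}-\frac{1}{a_{t+1}} = \frac{a_{t+1}-a_t}{a_t a_{t+1}} \le \frac{a_{t+1}-a_t}{a_t^2} \le -\gamma$ (using $0<a_{t+1}<a_t$), whereas you factor and use $\frac{1}{1-x}\ge 1+x$; these are equivalent manipulations.
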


\begin{proof}
    From inequality~(\ref{eq:lammabound1-1}), we know that
    \[\frac{a_{t+1}-a_t}{a_t^2}\leq -\gamma. \]
    Because $0<a_{t+1}<a_t$, 
    \[\frac{1}{a_t}-\frac{1}{a_{t+1}}=\frac{a_{t+1}-a_t}{a_ta_{t+1}}\leq \frac{a_{t+1}-a_t}{a_t^2}\leq -\gamma.\]
    Telescoping the previous inequality for $t=0,\ldots T$, we get 
    \[\frac{1}{a_0}-\frac{1}{a_T}\leq -\gamma T\Rightarrow a_T\leq \frac{a_0}{1+\gamma T a_0}.\]
\end{proof}

\paragraph{Case 2.} $p_{a^*}(t)\geq 1/2$ and $t\in S_{LC}$. 

\begin{lemma}\label{thm:largeCorr1}
    The total regret $Rg_{LC}$ occurred during the recovery process of large corruptions where $q_{a^*}(t)\leq \frac{1}{2}$ can be upper bounded as
    \begin{equation*}
        Rg_{LC} \le {\frac{4C}{\Delta}}.
    \end{equation*}
\end{lemma}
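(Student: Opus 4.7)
The strategy is to reduce the regret bound to a length bound on the union of the large-corruption recovery intervals. Since $q_{a^*}(t)\le 1/2$ on this region, $a^*$ is the leading arm and the per-step regret is
\begin{align*}
\sum_{a\ne a^*}\Delta_a p_a(t)\le q_{a^*}(t)\le 1,
\end{align*}
so it suffices to show that $\sum_i\mathbb{E}[t'^{(LC)}_i-t^{(LC)}_i]\le 4C/\Delta$.

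\textbf{Step 1: an isolated large corruption.} Fix a large corruption at $t_c$ and temporarily assume that no other large corruption occurs inside its recovery interval $(t_c,\phi]$, where $\phi=\min\{t>t_c:q_{a^*}(t)\le q_{a^*}(t_c)\}$. By the definition of $\phi$ we have $q_{a^*}(t)\ge q_{a^*}(t_c)$ on $(t_c,\phi)$, and by the isolation assumption any corruption in that interval is small ($c(t)\le\Delta/4$). The drift computation from Section~\ref{sec:plarge} then yields
\begin{align*}
\mathbb{E}[q_{a^*}(t+1)-q_{a^*}(t)\mid H(t)]\le -\frac{\alpha\Delta}{2K}q_{a^*}(t_c)^2,\quad t\in(t_c,\phi),
\end{align*}
so the compensated process
\begin{align*}
M(t):=q_{a^*}(t)+\frac{\alpha\Delta}{2K}q_{a^*}(t_c)^2(t-t_c-1)
\end{align*}
is a supermartingale on $[t_c+1,\phi]$. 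Doob's optional stopping at $\phi\wedge t$ followed by monotone convergence reproduces inequality~(\ref{eq:largepInduction2}); combining it with the one-step drift at $t_c$ (the corruption jump) and at $\phi-1$ (the boundary decrease), the $q_{a^*}(t_c)^2$ factors cancel and we recover equation~(\ref{eq:largepInduction}):
\begin{align*}
\mathbb{E}[\phi-t_c]\le\frac{4c(t_c)}{\Delta}.
\end{align*}

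\textbf{Step 2: nested corruptions and aggregation.} When $(t_c,\phi]$ contains another large corruption the supermartingale property breaks at the inner corruption step; I handle this by peeling from innermost outward, as sketched around Figure~\ref{fig:consecutive1-1}. The innermost recovery satisfies the hypothesis of Step~1 directly. Once its interval $(t_c^{\mathrm{in}},t_c^{\mathrm{in}\prime}]$ has been bounded, I splice it out of the time axis; the defining inequality $q_{a^*}(t_c^{\mathrm{in}\prime})\le q_{a^*}(t_c^{\mathrm{in}})$ guarantees that the post-splice sample path still dominates $q_{a^*}(t_{c,\mathrm{outer}})$ throughout what remains of the outer recovery, so the strong Markov property at $t_c^{\mathrm{in}\prime}$ lets me reapply Step~1 to the next-outermost corruption with the same prefactor $\frac{\alpha\Delta}{2K}q_{a^*}(t_{c,\mathrm{outer}})^2$. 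Iterating outward,
\begin{align*}
\sum_i\mathbb{E}[t'^{(LC)}_i-t^{(LC)}_i]\le\sum_i\frac{4c(t^{(LC)}_i)}{\Delta}\le\frac{4C}{\Delta},
\end{align*}
which combined with the per-step regret bound of $1$ yields $Rg_{LC}\le 4C/\Delta$.

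\textbf{Main obstacle.} The delicate part is Step~2: one must verify rigorously that splicing out an inner recovery interval preserves both the domination $q_{a^*}(t)\ge q_{a^*}(t_{c,\mathrm{outer}})$ and the supermartingale/drift structure required by optional stopping on the outer interval. A clean formalisation along the lines of the embedded-chain construction used in Lemma~\ref{thm:smallCorr1}, together with the strong Markov property invoked at each splice point, is what carries the bulk of the technical work.
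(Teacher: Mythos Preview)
Your proposal is correct and follows essentially the same approach as the paper: reduce to a length bound via the trivial per-step regret of $1$, handle an isolated large corruption by optional stopping on the supermartingale $q_{a^*}(t)+\frac{\alpha\Delta}{2K}q_{a^*}(t_c)^2\,t$ to obtain $\mathbb{E}[\phi-t_c]\le 4c(t_c)/\Delta$, and then peel nested corruptions from innermost outward by splicing out inner recovery intervals and using $q_{a^*}(t_c^{\mathrm{in}\prime})\le q_{a^*}(t_c^{\mathrm{in}})$ to preserve the drift lower bound on the outer embedded chain. The paper formalises your Step~2 exactly via the embedded-chain device you anticipate (constructing $\hat q^{(i)}$ on the union of the outer segments and verifying the one-step drift across the splice point), so your identification of the main technical burden is on target.
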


\begin{proof}

Note that for any recovery process $(t_s,t_s']$, if $\exists t\in (t_s,t_s']$ satisfies $q_{a^*}(t)\leq 1/2$, then $q_{a^*}(t_s)<1/2$. 

\begin{figure*}[htb]
\centering
\includegraphics[width=0.8\linewidth]{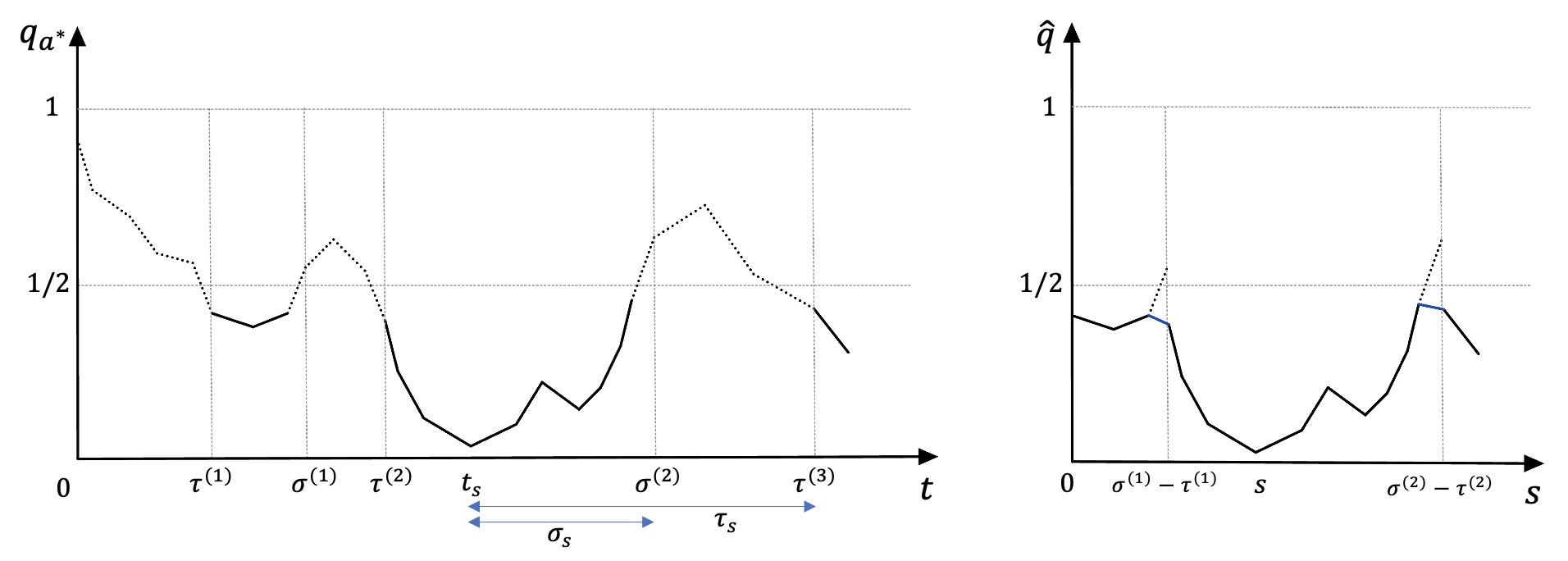}
\caption{An illustration of $\hat q(s)$ for Lemma~\ref{thm:smallCorr1}}
\label{fig:embedded2}
\end{figure*}

Consider the time step $t_s$ that satisfies $q_{a^*}(t_s)\leq 1/2$ and $c(t_s) > \Delta/4$. 
Then, for such $t_s$, it holds that 
\begin{align*}
\mathbb{E}[p_a(t_s+1)-p_a(t_s)|H(t_s)]&\leq \alpha p_a(t_s)^2\Big((r_a+c(t_s))-(r_{a^*}-c(t_s))\Big)\\
&\leq \alpha(2c(t_s)-\Delta) p_a(t_s)^2.
\end{align*}
Because $q_{a^*}=1-p_{a^*}=\sum_{a:a\neq a^*}p_a$, 
\begin{align*}
\mathbb{E}[q_{a^*}(t_s+1)|H(t_s)]-q_{a^*}(t_s)\leq& (2c(t_s)-\Delta)\sum_{a:a\neq a^*}\alpha p_a(t_s)^2\\
\leq& (2c(t_s)-\Delta)\frac{\alpha}{K}q_{a^*}(t_s)^2,
\end{align*}
where we used equation~(\ref{eq:jensen1}) in the last inequality. 

From equation~(\ref{eq:super1}), 
\begin{align*}
\mathbb{E}[q_{a^*}(t+1)|H(t)]-q_{a^*}(t)&=\mathbb{E}[q_{a^*}(t+1)-q_{a^*}(t)|H(t)]\leq -\frac{\alpha \Delta}{2K}q_{a^*}(t)^2
\end{align*}
for $t$ satisfying $q_{a^*}(t)\leq 1/2$ and $c(t)\leq \Delta/4$, including those $t\in S_{LC}$ but not including those $t\in\{t_i^{(LC)}, \forall i\}$.

If there is a large corruption in only one step, say step $t_0$ with corruption level $c(t_0)$, then $\mathbb{E}[q_{a^*}]$ will increase after $t_0$ and subsequently gradually decrease, as shown in Figure~\ref{fig:embedded3}. 
We want to upper bound the expected number of steps during the recovery process after corruption $c(t_0)$. 
We use the optional stopping theorem to give such a bound. 
Let $\phi=\min\{t> t_0:q_{a^*}(t)\leq q_{a^*}(t_0)\}$. When $t_0<t\leq \phi$, we have $q_{a^*}(t)\geq q_{a^*}(t_0)$. Then, 
\begin{equation}
    \mathbb{E}[q_{a^*}(t+1)|H(t)]-q_{a^*}(t)\leq-\frac{\alpha \Delta}{2K}q_{a^*}(t)^2\leq -\frac{\alpha \Delta}{2K}q_{a^*}(t_0)^2.  
\end{equation}
From optional stopping theorem, 
\begin{align*}
&\ \mathbb{E}[q_{a^*}(\phi\wedge t)]+\frac{\alpha \Delta}{2K}q_{a^*}(t_0)^2\mathbb{E}[\phi\wedge t]\\
\leq&\ \mathbb{E}[q_{a^*}(\phi\wedge (t_0+1))]+\frac{\alpha \Delta}{2K}q_{a^*}(t_0)^2\mathbb{E}[\phi\wedge (t_0+1)].
\end{align*}
Then, applying the monotone converge theorem, we get, 
\begin{align*}
\mathbb{E}[\phi-t_0-1]&\leq \lim_{t\rightarrow \infty}\mathbb{E}[\phi\wedge t]-\mathbb{E}[\phi\wedge (t_0+1)]\\
&\leq \frac{2K}{\alpha\Delta q_{a^*}(t_0)^2}(\mathbb{E}[q_{a^*}(t_0+1)]-\mathbb{E}[q_{a^*}(\phi)]). 
\end{align*}
Thus,
\begin{align*}
\mathbb{E}[\phi-t_0]\leq& \frac{2K}{\alpha\Delta q_{a^*}(t_0)^2}\Big((2c(t)-\Delta)\frac{\alpha}{K}q_{a^*}(t_0)^2+\frac{\alpha \Delta}{2K}q_{a^*}(t_0)^2\Big)+1=\frac{4c(t)}{\Delta}. 
\end{align*}

\begin{figure}
\centering
\includegraphics[width=0.7\linewidth]{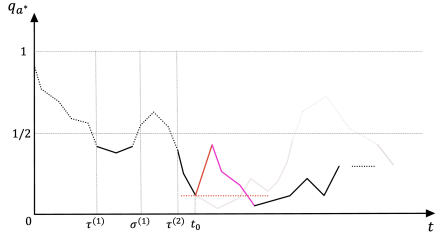}
\caption{An illustration of a single corruption for Lemma~\ref{thm:largeCorr1}}
\label{fig:embedded3}
\end{figure}

If there are consecutive corruptions (other corruptions come before recovery from the previous corruption), then the total extra regret incurred by these corruptions is upper bounded by the regret calculated by considering these corruption steps separately. 

This can be shown from inner corruptions to outer corruptions. We first prove the case depicted in Figure~\ref{fig:consecutive1} and then extend to general corruption patterns. In Figure~\ref{fig:consecutive1}, $t_i$'s are the steps with $c(t_i)>\Delta/4$ and $t_i'$ is the first time step after recovering from corruption $c(t_i)$, $\forall i$.  

\begin{figure}
\centering
\includegraphics[width=0.4\linewidth]{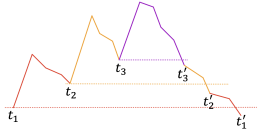}
\caption{An illustration of consecutive corruptions for Lemma~\ref{thm:largeCorr1}}
\label{fig:consecutive1}
\end{figure}

First, the recovery process for the corruption at time $t_3$  (colored purple) lasts for $\phi^{(3)}:=t_3'-t_3$ (considering the time interval $(t_3,t_3']$), which can be bounded the same as previously discussed, 
\[\mathbb{E}[\phi^{(3)}]=\mathbb{E}[t_3'-t_3]\leq\frac{4c(t_3)}{\Delta}. \] 

Next, consider the recovery process for corruption at time $t_2$ (colored yellow), which lasts for $(t_2'-t_3')+(t_3-t_2)$. 
Formally, we consider the time intervals $[t_2,t_3]$ and $(t_3', t_2']$ as a whole and apply the optional stopping theorem to it. 
We consider an embedded chain $\{\hat q^{(2)}(s)|s=0,1,2,\ldots, M-t_3'+t_3-t_2\}$ of the process $\{q_{a^*}(t)\}$ with $t\in[t_2,t_3]\cup [t_3'+1, M)$, where $M$ is a large number no smaller than the recovery time step $t_2'$. 

Let $\phi^{(2)}=\min\{s>0:\hat q^{(2)}(s)\leq \hat q^{(2)}(0)\}$. When $0<s\leq \phi^{(2)}$, we have $\hat q^{(2)}(s)\geq \hat q^{(2)}(0)$. 

Then, for $s\in[1,t_3-t_2-1]\cup [t_3-t_2+1,M-t_3'+t_3-t_2]$, corresponding to $t\in[t_2+1,t_3-1]\cup [t_3'+1, M]$ in process$\{q_{a^*}(t)\}$, it satisfies that
\begin{equation}\label{eq:optional3}
    \mathbb{E}[\hat q^{(2)}(s+1)|H(s)]-\hat q^{(2)}(s)\leq-\frac{\alpha \Delta}{2K}\hat q^{(2)}(s)^2\leq -\frac{\alpha \Delta}{2K}\hat q^{(2)}(0)^2. 
\end{equation}
For $s=t_3-t_2-1$,  
\begin{eqnarray*}
    &&\mathbb{E}[\hat q^{(2)}(s+1)|H(s)]-\hat q^{(2)}(s)\\
    &=&\mathbb{E}[\hat q^{(2)}(t_3-t_2)|H(t_3-t_2-1)]-\hat q^{(2)}(t_3-t_2-1) \\
    &=&\mathbb{E}[q_{a^*}(t_3'+1)|H(s)]-q_{a^*}(t_3)\\
    &\leq& \mathbb{E}[q_{a^*}(t_3'+1)|H(s)]-q_{a^*}(t_3')\\
    &\leq& -\frac{\alpha \Delta}{2K}q_{a^*}(t_3')^2\\
    &\leq& -\frac{\alpha \Delta}{2K}q_{a^*}(t_2)^2\\
    &=& -\frac{\alpha \Delta}{2K}\hat q^{(2)}(0)^2. 
\end{eqnarray*}
Therefore, $\mathbb{E}[\hat q^{(2)}(s+1)|H(s)]-\hat q^{(2)}(s)\leq -\frac{\alpha \Delta}{2K}\hat q^{(2)}(0)^2$ holds for all $s\in[1,M-t_3'+t_3-t_2]$, and therefore the process $\{\phi^{(2)}(s)|s\geq 1\}$ is a supermartingale.  

From optional stopping theorem, 
\begin{eqnarray*}
&&\mathbb{E}[\hat q^{(2)}(\phi^{(2)}\wedge s)]+\frac{\alpha \Delta}{2K}\hat q^{(2)}(0)^2\mathbb{E}[\phi^{(2)}\wedge s]\\
&\leq& \mathbb{E}[\hat q^{(2)}(\phi^{(2)}\wedge 1)]+\frac{\alpha \Delta}{2K}\hat q^{(2)}(0)^2\mathbb{E}[\phi^{(2)}\wedge 1].
\end{eqnarray*}
Then, applying the monotone converge theorem, we get, 
\begin{eqnarray*}
\mathbb{E}[\phi^{(2)}-1]&\leq& \lim_{s\rightarrow \infty}\mathbb{E}[\phi^{(2)}\wedge s]-\mathbb{E}[\phi^{(2)}\wedge 1]\\
&\leq& \frac{2K}{\alpha\Delta \hat q^{(2)}(0)^2}(\mathbb{E}[\hat q^{(2)}(1)]-\mathbb{E}[\hat q^{(2)}(\phi^{(2)})]). 
\end{eqnarray*}
Thus, 
\begin{align*}
\mathbb{E}[\phi^{(2)}]&\leq \frac{2K}{\alpha\Delta \hat q^{(2)}(0)^2}\Big((2c(t_2)-\Delta)\frac{\alpha}{K}\hat q^{(2)}(0)^2+\frac{\alpha \Delta}{2K}\hat q^{(2)}(0)^2\Big)+1\\
&=\frac{4c(t_2)}{\Delta}. 
\end{align*}

The same analysis holds for the recovery process for corruption at time $t_1$ (colored red), where the expected recovery time for $c(t_1)$ can be upper bounded by 
\[\mathbb{E}[\phi^{(1)}]\leq \frac{4c(t_1)}{\Delta}.\]

Put them all together and we can get that the expected number of total recovery steps needed can be upper bounded by $\sum_{i}\frac{4c(t_i)}{\Delta}=\frac{4C}{\Delta}$ where $C$ is the total corruption level. 

If there are multiple consecutive corruptions at time steps $t_1, t_2, \ldots$, we can always dissect the process and combine the recovery process for each corruption $t_i$ together into an embedded chain where equation~(\ref{eq:optional3}) is satisfied. Then, we can upper bound the number of steps in such an embedded chain by $\frac{4c(t_i)}{\Delta}$ using the optional stopping theorem. 

\begin{figure}[htb]
\centering
\includegraphics[width=0.4\linewidth]{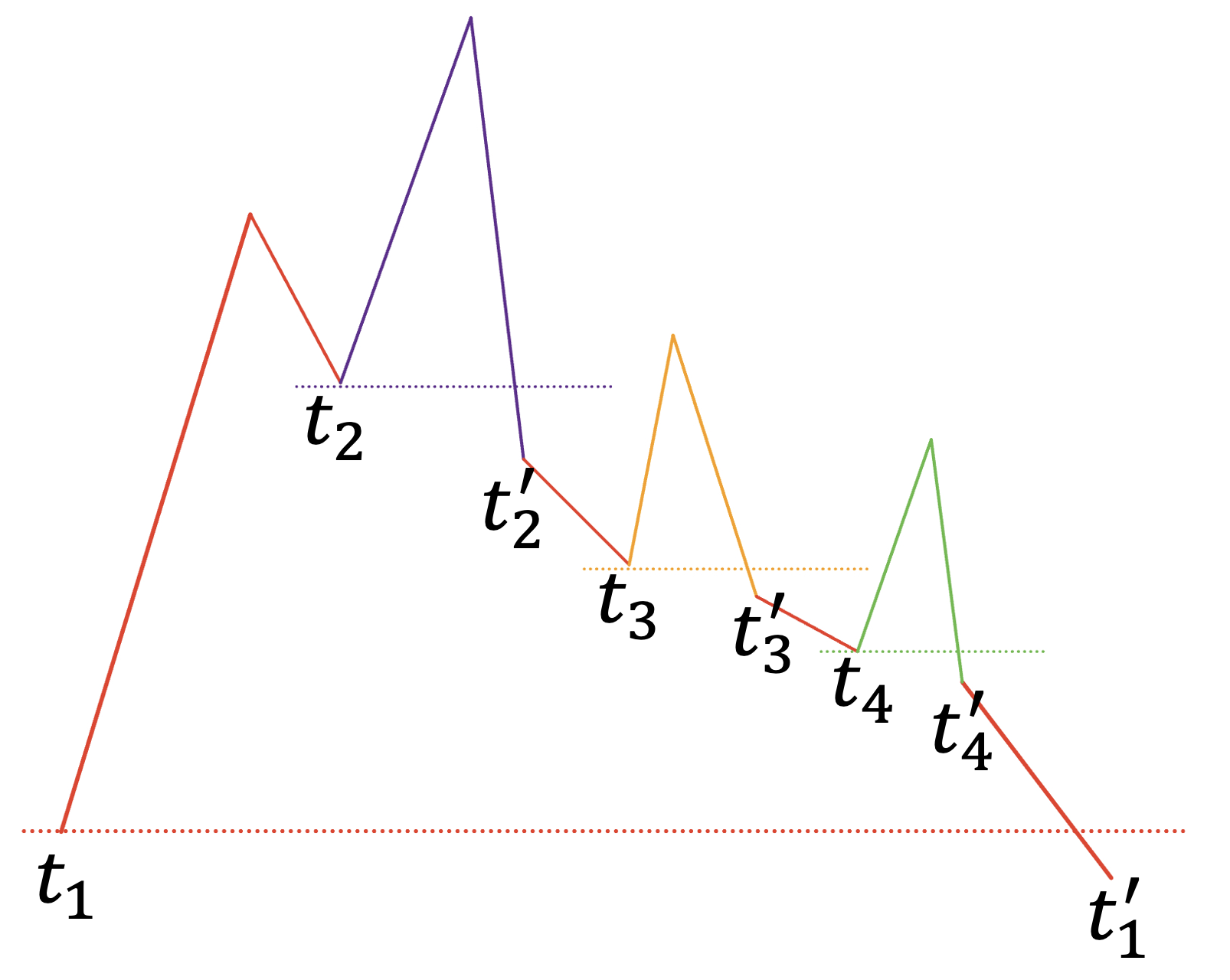}
\caption{An illustration of general consecutive corruptions for Lemma~\ref{thm:largeCorr1}}
\label{fig:consecutive2}
\end{figure}

A depiction of the general case can be seen in Figure~\ref{fig:consecutive2}, where several dissections $(t_1,t_2], (t_2',t_3], (t_3',t_4], (t_4,t_1']$ should be combined together into the corresponding embedded chain for corruption at $t_1$. 

This dissection-then-combination process can be operated from ``inner'' corruptions to ``outer'' corruptions. 
Specifically, we label the corruption steps as $t_i$'s and label $t_i'$ to be the first recovery time step. Next, these time steps are ordered according to the time axis, e.g., $t_1,t_2,t_2',t_3,t_3',t_4,t_4',t_1'$ is the ordering for the example in Figure~\ref{fig:consecutive2}. Then, we can iteratively find two consecutive $(t_i,t_i')$ in the ordering, upper bound the number of recovery steps between $t_i$ and $t_i'$ by $\frac{4c(t_i)}{\Delta}$ using the optional stopping theorem, remove all time steps in the range $(t_i, t_i']$, combine the rest $q_{a^*}$ values in time intervals $[0,t_i]\cup (t_i',\infty)$ into an updated process, remove the two time steps $t_i, t_i'$ in the ordering. 

As a result, the expected number of total recovery steps needed can be upper bounded by $\sum_{i}\frac{4c(t_i)}{\Delta}=\frac{4C}{\Delta}$ in general case, where each such step gives at most 1 regret. Therefore, $Rg_{LC}$ is upper bounded by $\frac{4C}{\Delta}\times 1=\frac{4C}{\Delta}$. 
\end{proof}

\subsection{When $p_{a^*}(t)< 1/2$}

When $p_{a^*}(t) < 1/2$, we also divide the process into two cases. 

\paragraph{Case 1.} $p_{a^*}(t) < 1/2$ and $t\in S_{LC}$. 

\begin{lemma}\label{lemma:largeCorruptionsmallp}
The expected number of steps during the recovery processes of all large corruptions (i.e., $t\in S_{LC}$) where $q_{a^*}(t)>1/2$ is upper bounded by $O({\frac{C}{\Delta}})$. 
\end{lemma}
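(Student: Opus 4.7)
The proof mirrors Lemma~\ref{thm:largeCorr1} but replaces the $q_{a^*}$ drift analysis (only valid when $a^*$ is the leading arm) with the $p_{a^*}^{-1}$ drift analysis from Section~\ref{sec:psmall} (valid precisely when $p_{a^*}<1/2$, i.e., $q_{a^*}>1/2$). The budget for the count is that each large corruption costs more than $\Delta/4$, so there are at most $w \le 4C/\Delta$ of them, and $\sum_i c(t_i^{(LC)}) \le C$. The plan is to show that for each large corruption $t_c := t_i^{(LC)}$, the expected number of steps inside its recovery interval $(t_c,t_c']$ with $q_{a^*}(t)>1/2$ is $O(c(t_c)/\Delta)$; summing across $i$ then gives the claimed $O(C/\Delta)$ bound.

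For a single large corruption at $t_c$, the key drift inequality already derived in Section~\ref{sec:psmall} states that whenever $p_{a^*}(t)<1/2$, $\mathbb{E}[p_{a^*}^{-1}(t+1)\mid H(t)] - p_{a^*}^{-1}(t) \le -\xi + \alpha c(t)(1+\epsilon+\tfrac{1}{1+\alpha})$, which reduces to $-\xi$ on uncorrupted steps. In particular the expected one-step jump at the corruption step $t_c$ itself is $O(c(t_c))$. Letting $\phi := \min\{t>t_c : p_{a^*}^{-1}(t) \le p_{a^*}^{-1}(t_c) \vee 2\}$ and applying the optional stopping theorem to the supermartingale $\{p_{a^*}^{-1}(\phi\wedge t) + \xi(\phi\wedge t - t_c - 1)\}_{t\ge t_c+1}$, followed by the monotone convergence step used in equation~(\ref{eq:largepInduction2}) of Lemma~\ref{thm:largeCorr1}, yields $\mathbb{E}[\phi - t_c] \le O(c(t_c)/\xi) = O(c(t_c)/\Delta)$. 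When $p_{a^*}^{-1}(t_c)>2$ (Case A), the interval $(t_c,\phi]$ coincides with the whole recovery $(t_c,t_c']$ and sits entirely above the threshold; when $p_{a^*}^{-1}(t_c)\le 2$ (Case B), $\phi$ simply terminates the first excursion of $p_{a^*}^{-1}$ above $2$ triggered by the corruption.

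The main obstacles are two-fold: within Case B, $p_{a^*}^{-1}$ may cross the threshold $2$ multiple times inside a single recovery interval $(t_c,t_c']$, and consecutive large corruptions may overlap so that their recovery intervals interleave. Both are resolved by the same embedded-chain and inner-to-outer peeling technique used in the proof of Lemma~\ref{thm:largeCorr1}: order the large corruption times, peel off the innermost first-excursion interval $(t_c,\phi]$, and glue the remaining trajectory into a single embedded chain. Because $p_{a^*}^{-1}$ at the glue point is no larger than its value just before the removed interval, the $-\xi$ drift (and hence the supermartingale property of the glued chain) persists, so iterated application of optional stopping bounds the total expected "above-$2$" time within $S_{LC}$ by $\sum_i O(c(t_i^{(LC)})/\Delta) = O(C/\Delta)$. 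The most delicate step will be verifying that the glued drift bound is still $-\xi$ across every glue point even when a small corruption happens to fall there, which should follow from absorbing these small corruptions into the $O(C)$ budget on the right-hand side.
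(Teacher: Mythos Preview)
Your overall strategy---drift on $p_{a^*}^{-1}$, optional stopping, then inner-to-outer peeling for overlaps---is indeed the same architecture the paper uses. But there is a genuine gap in the drift step. You assert that the inequality
\[
\mathbb{E}\bigl[p_{a^*}^{-1}(t+1)\mid H(t)\bigr]-p_{a^*}^{-1}(t)\;\le\;-\xi+\alpha c(t)\Bigl(1+\epsilon+\tfrac{1}{1+\alpha}\Bigr)
\]
holds ``whenever $p_{a^*}(t)<1/2$''. That is not what Section~\ref{sec:psmall} proves: the derivation there is conditional on $a^*$ \emph{not} being the leading arm $a_l$. For $K\ge 3$ these are different events---e.g.\ with probabilities $(0.4,0.35,0.25)$ the optimal arm can have $p_{a^*}<1/2$ and still be $a_l$, in which case $p_{a^*}$ is updated by the residual rule $p_{a_l}(t+1)=1-\sum_{a'\ne a_l}p_{a'}(t+1)$ and the displayed drift bound is simply unavailable. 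Consequently the process you feed to optional stopping is not a supermartingale on every step of $(t_c,\phi]$, and the bound $\mathbb{E}[\phi-t_c]=O(c(t_c)/\xi)$ does not follow.

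The paper's proof fixes exactly this: within each recovery interval it splits the steps with $q_{a^*}>1/2$ into (i) those where $a^*\ne a_l$, handled by the $p_{a^*}^{-1}$ drift you describe and costing $1+c(t_c)/\zeta$ steps per corruption, and (ii) those where $a^*=a_l$, handled by the $q_{a^*}$ drift exactly as in Lemma~\ref{thm:largeCorr1}, costing $4c(t_c)/\Delta$ steps per corruption. Both pieces are $O(c(t_c)/\Delta)$ because $\zeta=\Theta(\Delta)$, and they sum to $O(C/\Delta)$. To repair your argument you should likewise split on whether $a^*=a_l$, not on whether $p_{a^*}<1/2$, and run two separate (or interleaved) optional-stopping arguments; the peeling technique you already outlined then handles nesting in each piece.
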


\begin{proof}

Note that we have assumed $0<\alpha<\frac{\Delta}{r^*-\Delta}$. Thus, 
\[1+\alpha<\frac{r^*}{r^*-\Delta}.\]
As a result, there exists a constant $\epsilon>0$ such that $(1+\epsilon)(1+\alpha)<\frac{r^*}{r^*-\Delta}$, which implies that there exists another constant $\xi$ such that 
\[\xi:=\alpha \frac{r^*}{1+\alpha}-\alpha (r^*-\Delta)(1+\epsilon)>0. \]

When optimal arm $a^*$ is not the leading arm, during which 
$$ 
p_{a^*}(t+1)=\left\{
\begin{aligned}
&(1+\alpha)p_{a^*}(t) && \ \ \text{ w.p. }\  p_{a^*}(t)r^*\\
&p_{a^*}(t)-\alpha p_{a^*}^2(t)\frac{1}{p_{a_l}(t)} && \ \ \text{ w.p. }\ p_{a_l}(t)r_{a_l}\\
&p_{a^*}(t) && \ \ \text{ otherwise}
\end{aligned}
\right.
$$

If we denote $x(t)=p_{a^*}^{-1}(t)$, then 
$$ 
x(t+1)=\left\{
\begin{aligned}
&x(t)-\frac{\alpha}{1+\alpha}x(t) && \ \ \text{ w.p. }\  r^*/x(t)\\
&x(t)+\alpha\frac{x(t)}{p_{a_l}(t)x(t)-\alpha} && \ \ \text{ w.p. }\ p_{a_l}(t)r_{a_l}\\
&x(t) && \ \ \text{ otherwise}
\end{aligned}
\right.
$$

Thus, when there is no corruption, 
\begin{align}
\mathbb{E}[x(t+1)|H(t)]-x(t)&=\alpha r_{a_l}(t)\frac{p_{a_l}(t)x(t)}{p_{a_l}(t)x(t)-\alpha}-\frac{\alpha r^*}{1+\alpha}\label{eq:smallpInduction2}\\
&\leq \alpha(r^*-\Delta)(1+\epsilon)-\frac{\alpha r^*}{1+\alpha}=-\xi,\notag
\end{align}
where the above equation holds because for the leading arm, $p_{a_l}(t)>1/K$ and $r_{a_l}\leq r^*-\Delta$. 

If there is corruption in round $t$, then only corruption on the optimal arm or the leading arm will change the update function.

The update rule becomes to
$$ 
x(t+1)=\left\{
\begin{aligned}
&x(t)-\frac{\alpha}{1+\alpha}x(t) && \ \ \text{ w.p. }\  r'_{a^*}/x(t)\\
&x(t)+\alpha\frac{x(t)}{p_{a_l}(t)x(t)-\alpha} && \ \ \text{ w.p. }\ r_{a_l}'(t)p_{a_l}(t)\\
&x(t) && \ \ \text{ otherwise}
\end{aligned}
\right.
$$
Then,
\begin{eqnarray}
&&\mathbb{E}[x(t+1)|H(t)]-x(t)\\
&\leq &\alpha (r_{a_l}(t)+c(t))\frac{p_{a_l}(t)x(t)}{p_{a_l}(t)x(t)-\alpha}-\frac{\alpha (r^*-c(t))}{1+\alpha}\notag\\
&\leq& \alpha(r^*-\Delta+c(t))(1+\epsilon)-\frac{\alpha (r^*-c(t))}{1+\alpha}\\
&=& -\xi+\alpha c(t)\big(1+\epsilon+\frac{1}{1+\alpha}\big).\label{eqn:corr1-3}
\end{eqnarray}

The right hand side of equation~(\ref{eqn:corr1-3}) can be greater than 0, which means that after the corruption time step, the expected value of $x(t)$ can increase.
Hence, the increment can be counterbalanced after the next $\big\lceil\alpha c(t)\big(1+\epsilon+\frac{1}{1+\alpha}\big)/\xi\big\rceil=\big\lceil c(t)/\zeta\big\rceil$ time steps that do not have corruptions, 
where we define the constant $\zeta := \frac{\xi}{\alpha\big(1+\epsilon+\frac{1}{1+\alpha}\big)}$.
Therefore, after a large corruption $c(t_s)$ at step $t_s$, if $a^*$ is not the leading arm, then the impact of $c(t)$ will be eliminated after $1+\frac{c(t)}{\zeta}$ steps in expectation.

On the other hand, if $a^*$ is still the leading arm after a large corruption $c(t_s)$, then
the impact of $c(t)$ will be eliminated after expected $\frac{4c(t)}{\Delta}$ steps (similar to the proof of Lemma~\ref{thm:largeCorr1}). 
Moreover, if some large corruption occurs in a recovery process $[t_s, t_s')$, we can still use the same technique as Lemma \ref{thm:largeCorr1} to deal with them.

Therefore, noting that there are at most $\lfloor 4C/\Delta\rfloor$ steps with large corruptions, the expected number of steps during these recovery processes with $p_{a^*}(t) < 1/2$ can be upper bounded by $\sum_{s}(1+\frac{c(t^{(LC)}_s)}{\zeta}+\frac{4c(t^{(LC)}_s)}{\Delta})\leq \frac{4C}{\Delta} + \frac{C}{\zeta}+\frac{C}{\Delta} = \frac{C}{\zeta}+\frac{5C}{\Delta}$.

Recall that 
\begin{align*}
    \zeta &= \frac{\alpha \frac{r^*}{1+\alpha}-\alpha (r^*-\Delta)(1+\epsilon)}{\alpha\big(1+\epsilon+\frac{1}{1+\alpha}\big)}=\frac{r^*-(r^*-\Delta)(1+\epsilon)(1+\alpha)}{1+(1+\epsilon)(1+\alpha)} = \Theta(\Delta),
\end{align*}
hence $\frac{C}{\zeta}+\frac{5C}{\Delta} = O({C\over \Delta})$.
\end{proof}

\paragraph{Case 2.} $p_{a^*}(t) < 1/2$ and $t\notin S_{LC}$. 

In this case (Lemma~\ref{lemma:initRegret} and Lemma~\ref{lemma:bound2}), we only consider the remaining steps after removing all the large corruptions (along with their recovery processes). Note that for any recovery process $(t_s,t_s']$, it satisfies $q_{a^*}(t_s')\leq q_{a^*}(t_s)$, and thus we can put the rest (after removal of the recovery processes of large corruptions) together into one whole process that satisfies the conditions required in the proof of Lemma~\ref{lemma:initRegret} and Lemma~\ref{lemma:bound2}.

\begin{lemma}\label{lemma:initRegret}
If there is no time step $t$ with large corruption ($c(t)> \Delta/4$), then the expected time steps for $q_{a^*}$ to decrease from $1 - \frac{1}{K}$ (the initial value) to $\frac{1}{2}$ can be upper bounded by $\frac{K}{\xi} + \frac{8K}{\alpha\Delta}$. 
\end{lemma}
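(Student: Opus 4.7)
The plan is to bound the hitting time $T := \min\{t \ge 0 : q_{a^*}(t) \le 1/2\}$ by partitioning $[0,T)$ according to whether $a^*$ is the current leading arm. Let $\mathcal{T}_A := \{t < T : a^*(t) \ne a_l(t)\}$ and $\mathcal{T}_B := \{t < T : a^*(t) = a_l(t)\}$, so $\mathbb{E}[T] = \mathbb{E}[|\mathcal{T}_A|] + \mathbb{E}[|\mathcal{T}_B|]$. The targets are $\mathbb{E}[|\mathcal{T}_A|] \le K/\xi$ via the $x(t) := p_{a^*}^{-1}(t)$ analysis of Section~\ref{sec:psmall}, and $\mathbb{E}[|\mathcal{T}_B|] \le 8K/(\alpha\Delta)$ via the $q_{a^*}(t)$ analysis of Section~\ref{sec:plarge}.

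For $\mathbb{E}[|\mathcal{T}_B|]$ I take $q_{a^*}$ as the potential. When $a^*(t) = a_l(t)$, the derivation preceding~(\ref{eq:super1}) in the proof of Lemma~\ref{thm:smallCorr1}, adapted to permit small corruption $c(t) \le \Delta/4$, gives $\mathbb{E}[q_{a^*}(t+1) \mid H(t)] - q_{a^*}(t) \le -\tfrac{\alpha\Delta}{2K} q_{a^*}(t)^2 \le -\tfrac{\alpha\Delta}{8K}$, where the last step uses $q_{a^*}(t) > 1/2$. When $a^*(t) \ne a_l(t)$, a direct computation from the update rule yields $\mathbb{E}[p_{a^*}(t+1) - p_{a^*}(t) \mid H(t)] = \alpha p_{a^*}(t)^2 (r'_{a^*}(t) - r'_{a_l}(t)) \ge \alpha p_{a^*}(t)^2 (\Delta - 2c(t)) \ge 0$, so the drift on $q_{a^*}$ is non-positive. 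Telescoping the per-step expected drifts across $[0,T)$ yields $q_{a^*}(0) \ge \tfrac{\alpha\Delta}{8K} \mathbb{E}[|\mathcal{T}_B|]$, hence $\mathbb{E}[|\mathcal{T}_B|] \le 8K/(\alpha\Delta)$.

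For $\mathbb{E}[|\mathcal{T}_A|]$ I take $x(t) := p_{a^*}^{-1}(t)$, which by~(\ref{eq:smallpInduction}) satisfies $\mathbb{E}[x(t+1) \mid H(t)] - x(t) \le -\xi$ whenever $a^*(t) \ne a_l(t)$. Letting $[\tau_k, \sigma_k)$ denote the successive maximal sub-phase A intervals, optional stopping inside each interval gives $\xi \cdot \mathbb{E}[\sigma_k - \tau_k \mid \mathcal{F}_{\tau_k}] \le x(\tau_k) - \mathbb{E}[x(\sigma_k) \mid \mathcal{F}_{\tau_k}]$. Summing and telescoping yields $\xi \cdot \mathbb{E}[|\mathcal{T}_A|] \le \mathbb{E}[x(0) - x(T)] + \sum_k \mathbb{E}[x(\tau_{k+1}) - x(\sigma_k)]$, where the first term is at most $K - 1$ (using $x(0) = K$ and $x(T) \ge 1$) and the second sum is the cumulative growth of $x$ over the intervening sub-phase B windows.

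The central obstacle is controlling this sub-phase B correction: $x(t)$ is \emph{not} in general a supermartingale when $a^*(t) = a_l(t)$---a short explicit calculation shows that $\mathbb{E}[x(t+1) \mid H(t)] > x(t)$ is possible in sub-phase B for parameter choices still allowed by $\alpha < \Delta/(r^* - \Delta)$, because the Jensen gap works against us when converting expected growth of $p_{a^*}$ back to $x = 1/p_{a^*}$. I would close this gap by an amortized coupling to $q_{a^*}$: through the identity $x = 1/(1 - q_{a^*})$, each unit of expected $x$-growth in sub-phase B is tied to a bounded drop in $q_{a^*}$, and since the total expected drop of $q_{a^*}$ is bounded by $q_{a^*}(0) \le 1$ and the per-step sub-phase B drift on $x$ admits a concrete upper bound in terms of $\alpha, K, \Delta$, the correction absorbs into the $K/\xi$ term, yielding $\mathbb{E}[|\mathcal{T}_A|] \le K/\xi$. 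Combining both bounds gives $\mathbb{E}[T] \le K/\xi + 8K/(\alpha\Delta)$, as required.
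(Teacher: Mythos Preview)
Your decomposition into $\mathcal{T}_A$ (when $a^*$ is not leading) and $\mathcal{T}_B$ (when $a^*$ is leading), with potentials $x=p_{a^*}^{-1}$ and $q_{a^*}$ respectively, mirrors the paper's argument exactly. Your $\mathcal{T}_B$ bound is correct and essentially identical to the paper's, which invokes the optional-stopping device of Lemma~\ref{thm:largeCorr1} and arrives at the same $8K/(\alpha\Delta)$.

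For $\mathcal{T}_A$ you go further than the paper by explicitly flagging the cross-term: the possible growth of $x$ across intervening $\mathcal{T}_B$ intervals. The paper simply writes the bound $\frac{(1/K)^{-1}-((1-\alpha)/2)^{-1}}{\xi}$ in one line, without addressing what happens to $x$ when $a^*$ is leading; so you have correctly identified a subtlety that the paper's own proof glosses over. However, your proposed fix has a sign problem. You say ``each unit of expected $x$-growth in sub-phase B is tied to a bounded drop in $q_{a^*}$,'' but $x=1/(1-q_{a^*})$ is \emph{increasing} in $q_{a^*}$, so $x$-growth corresponds to $q_{a^*}$-\emph{growth}, not drop. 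The positive part of $\mathbb{E}[x(t+1)-x(t)]$ during sub-phase B is a Jensen/second-order effect, not the first-order drift of $q_{a^*}$, and the total budget $q_{a^*}(0)\le 1$ of $q_{a^*}$-drop cannot pay for it in the way you sketch. A workable patch is instead to bound the per-step expected increment of $x$ during $\mathcal{T}_B$ by a constant depending only on $\alpha,K$ (using that the leader satisfies $p_{a^*}\ge 1/K$) and then multiply by your already-established $\mathbb{E}[|\mathcal{T}_B|]\le 8K/(\alpha\Delta)$; this closes the argument, though it may not recover the constant $K/\xi$ exactly. The paper does not resolve this point either.
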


\begin{proof}
    The proof of Lemma~\ref{lemma:initRegret} follows a similar method as the proof of Lemma~\ref{lemma:largeCorruptionsmallp}. 
    
    First, consider the time steps between $p_{a^*}(0)=1/K$ and $p_{a^*}\leq 1/2$ where the arm $a^*$ is not the leading arm. From the update function, it satisfies that $|p_{a^*}(t+1)-p_{a^*}(t)|\leq \alpha p_{a^*}(t), \forall t$. Thus, from inequality~(\ref{eq:smallpInduction2}), the expected number of such steps can be upper bounded by $\frac{(1/K)^{-1}-{((1-\alpha)/2)^{-1}}}{\xi}\leq \frac{K}{\xi}$. 

    Next, consider the time steps between $p_{a^*}(0)=1/K$ and $p_{a^*}\leq 1/2$ where the arm $a^*$ is the leading arm. From a similar analysis as in Lemma~\ref{thm:largeCorr1}, the expected number of such steps can be upper bounded by $\frac{2K}{\alpha\Delta (1/2)^2}(\frac{1+\alpha}{2}-\frac{1}{K})\leq \frac{8K}{\alpha\Delta}$. 
\end{proof}

\begin{lemma}\label{lemma:bound2}
    If there does not exist any time step $t$ with large corruption ($c(t)> \Delta/4$), then the following holds. 
    \begin{enumerate}
        \item If $q_{a^*}(0)\leq \frac 1 2$, then there exists a constant $\rho<1$ such that $\mathbb{P}(\sigma^{(1)}<\infty)<\rho$, and $\mathbb{P}(\sigma^{(k)}<\infty|\sigma^{(k-1)}<\infty)<\rho$. 
        \item $Q_0:=\sum_{t=0}^{\infty}\mathbb{P}\big(q_{a^*}(t)\geq \frac 1 2\big|q_{a^*}(0)\leq \frac 1 2\big)<\infty$.
        \item With probability 1, $q_{a^*}(t)\rightarrow 0$ as $t\rightarrow \infty$. 
    \end{enumerate}
\end{lemma}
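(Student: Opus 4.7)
The plan is to adapt the argument of~\citet{denisov2020regret} for the no-corruption case so that it tolerates the small corruptions ($c(t) \leq \Delta/4$) that survive on the restricted time axis obtained by excising the recovery intervals $S_{LC}$. On the restricted process $\{q_{a^*}(t): t \notin S_{LC}\}$, two drift bounds already established suffice: Lemma~\ref{thm:smallCorr1} gives $\mathbb{E}[q_{a^*}(t+1) - q_{a^*}(t) \mid H(t)] \leq -\frac{\alpha\Delta}{2K} q_{a^*}(t)^2$ when $q_{a^*}(t) \leq 1/2$, and the calculation in the proof of Lemma~\ref{lemma:largeCorruptionsmallp} gives $\mathbb{E}[x(t+1) - x(t) \mid H(t)] \leq -\xi + \alpha c(t)(1+\epsilon+1/(1+\alpha))$ for $x(t) = p_{a^*}^{-1}(t)$ when $q_{a^*}(t) > 1/2$. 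The key global observation is that $\sum_{t \notin S_{LC}} c(t) \leq C$, so the cumulative positive contribution from small corruptions to any such bound is an $O(\alpha C)$ correction.

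For Part~1, I first verify that $q_{a^*}(\cdot)$ is a supermartingale in both regimes on the restricted timeline. In the regime $q_{a^*}(t) > 1/2$, a direct computation of $\mathbb{E}[p_{a^*}(t+1) - p_{a^*}(t) \mid H(t)] = \alpha p_{a^*}(t)[r'_{a^*}(t) - p_{a^*}(t) r'_{a_l}(t)]$ together with $r^* \geq \Delta$ and $p_{a^*}(t) < 1/2$ shows that the $c(t) \leq \Delta/4$ correction is absorbed and the drift on $q_{a^*}$ remains non-positive. Doob's maximal inequality then gives $\mathbb{P}(\sigma^{(1)} < \infty) \leq 2 \mathbb{E}[q_{a^*}(0)]$, but this is $<1$ only when the initial state is bounded away from $1/2$. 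To obtain a uniform bound, I use a one-step conditioning: with probability $\Omega(\Delta)$ the next step samples the leading arm $a^*$ with reward $1$, which by the explicit update rule drives $q_{a^*}$ downward by at least $\Omega(\alpha/K)$ to a value $\leq 1/2 - \Omega(\alpha/K)$; applying the maximal inequality after this conditioning yields $\mathbb{P}(\sigma^{(1)} < \infty) \leq 1 - \Omega(\alpha\Delta/K) =: \rho < 1$ uniformly in the starting state. The inductive form $\mathbb{P}(\sigma^{(k)} < \infty \mid \sigma^{(k-1)} < \infty) < \rho$ then follows from the strong Markov property at $\tau^{(k-1)}$.

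For Part~2, I decompose $Q_0 = \sum_t \mathbb{P}(q_{a^*}(t) \geq 1/2)$ into the expected lengths of the sojourns $[\sigma^{(k)}, \tau^{(k)})$. Each sojourn length is controlled by the optional stopping theorem applied to $x(t)$: after absorbing the summable small-corruption terms, the expected length of the $k$-th sojourn is $O(K/\xi)$ plus a per-sojourn corruption surplus whose total over all sojourns is $O(C/\xi)$. Combined with the geometric decay $\mathbb{P}(\sigma^{(k)} < \infty) \leq \rho^k$ from Part~1, this yields $Q_0 = O(K/\xi) + O(C/\xi) < \infty$. Part~3 is then immediate: Part~2 with Borel--Cantelli implies $\{q_{a^*}(t) \geq 1/2\}$ occurs only finitely often almost surely, and Property~2 of Lemma~\ref{thm:smallCorr1} then gives $q_{a^*}(t) \to 0$ almost surely on the tail. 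The main obstacle is the uniform $\rho < 1$ in Part~1: a direct Doob bound degenerates when the process downcrosses barely below $1/2$, and the one-step conditioning resolves this only because the ``good decrease'' event retains $\Omega(\Delta)$ probability uniformly in the adversary's choice of $c(t) \leq \Delta/4$.
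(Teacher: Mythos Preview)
Your approach mirrors the paper's, which simply observes that under small corruptions ($c(t)\le\Delta/4$) all the relevant one-step drift inequalities retain their sign and then defers the entire argument to Proposition~3 of~\cite{denisov2020regret}; you are in effect reconstructing that deferred proof, so the strategies coincide.

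Two minor points are worth flagging. First, your displayed drift $\mathbb{E}[p_{a^*}(t+1)-p_{a^*}(t)\mid H(t)]=\alpha p_{a^*}(t)\bigl[r'_{a^*}(t)-p_{a^*}(t)\,r'_{a_l}(t)\bigr]$ is not the correct expression: a direct computation from the update rule when $a^*\neq a_l$ gives $\alpha p_{a^*}^2(t)\bigl[r'_{a^*}(t)-r'_{a_l}(t)\bigr]$, which under $c(t)\le\Delta/4$ is still $\ge \alpha p_{a^*}^2(t)\cdot\Delta/2>0$, so your conclusion that $q_{a^*}$ is a supermartingale survives. Second, your $Q_0$ carries an $O(C/\xi)$ surplus coming from the positive $\alpha c(t)(1+\epsilon+1/(1+\alpha))$ term in the $x(t)$-drift bound. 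The paper's one-line justification avoids this: since every remaining step has $c(t)\le\Delta/4$, that per-step corruption term is itself at most a fixed fraction of $\xi$ (for a suitably small choice of $\epsilon$), so the drift on $x(t)$ stays uniformly $\le -\xi'$ for some constant $\xi'>0$ independent of $C$, and the sojourn-length bound becomes $O(K/\xi')$ with no $C$-dependence. This is why the paper treats $Q_0$ as a $C$-free constant in the final regret decomposition. Neither point affects the validity of your argument for the lemma as stated, since the claim is only $Q_0<\infty$.
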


The proof of Lemma~\ref{lemma:bound2} is almost the same as the proof of Proposition 3 in~\cite{denisov2020regret}, mainly because with small corruption, the update function follows the same format as with zero corruption; e.g., $\mathbb{E}[q_{a^*}]$ decreases after both the zero corruption steps and the small corruption steps. 

Note that the results in Lemma~\ref{lemma:bound2} is based on the assumption that $0 < q_{a^*}(0) \leq \frac{1}{2}$, while our initial state is $q_{a^*}(0) = \frac{1}{K}$. Thus, regret in this case is upper bounded by the sum of regret upper bounds in Lemma~\ref{lemma:initRegret} and Lemma~\ref{lemma:bound2}.

\subsection{Main proof of Theorem~\ref{thm:main}}
Here, we restate our main theorem.

\textbf{Theorem 2.}
    If constant $\alpha<\frac{\Delta}{r^*-\Delta}$, then the SAMBA algorithm for multi-armed bandits problem with corruption level $C$ ensures regret \[Rg(T)=O\Big(\frac{K}{\Delta}\log T +\frac C \Delta \Big).\]

\begin{proof}
  We bound the regret $Rg(T)$ as follows. 
  \begin{align}
      Rg(T)&\leq \sum_{a:a\neq a^*}\mathbb{E}\Big[\sum_{t=0}^{T-1}p_a(t)\Big]\\
      &\leq \mathbb{E}\Big[\sum_{t=0}^{T-1}\sum_{a:a\neq a^*}p_a(t)\Big] \\ 
      &\leq \mathbb{E}\Big[\sum_{t=0}^{T-1}q_{a^*}(t)\Big] \label{eq:regretineq1}
  \end{align}
    where we used the fact $r^*-r_a\leq 1$. Next, we properly partition the last term of Inequality~(\ref{eq:regretineq1}) and each part is bounded separately in the previous stated lemmas. 

  \begin{align}
      &\mathbb{E}\Big[\sum_{t=0}^{T-1}q_{a^*}(t)\Big]\\
      =&\sum_{t=0}^{T-1} \mathbb{E}\Big[q_{a^*}(t)\mathbb{I}\Big[q_{a^*}(t)\geq \frac 1 2\Big]\Big]+ \mathbb{E}\Big[\sum_{t=0}^{T-1}q_{a^*}(t)\mathbb{I}\Big[q_{a^*}(t)<\frac 1 2\Big]\Big]\label{eq:main11}\\
      \leq&\ Q_0+\frac{K}{\xi} + \frac{8K}{\alpha\Delta}+O\Big({C\over \Delta}\Big)+\mathbb{E}\Big[\sum_{t=0}^{T-1}q_{a^*}(t)\mathbb{I}\Big[q_{a^*}(t)<\frac 1 2\Big]\Big]\label{eq:main12}\\
      \leq&\ Q_0+\frac{K}{\xi} + \frac{8K}{\alpha\Delta}+O\Big({C\over \Delta}\Big)+\mathbb{E}\Big[\sum_{t=0}^{T-1}\hat q(s)\Big]+Rg_{LC}\label{eq:main13}\\
      \leq&\ Q_0+\frac{K}{\xi} + \frac{8K}{\alpha\Delta}+O\Big({C\over \Delta}\Big)+\sum_{s=0}^{T-1}\frac{2K}{4K+\alpha\Delta s}+\frac{4C}{\Delta}\label{eq:main14}\\
      \leq&\ Q_0+\frac{K}{\xi} + \frac{8K}{\alpha\Delta}+O\Big({C\over \Delta}\Big) +\frac{2K}{\alpha\Delta}\log T \label{eq:main15}\\
      \leq&\ O\Big(\frac{K}{\Delta}\log T +\frac C \Delta \Big)
      \label{eq:regretineq2}
  \end{align}
    Inequality~(\ref{eq:main12}) comes from Lemma~\ref{lemma:largeCorruptionsmallp}, Lemma~\ref{lemma:initRegret}, and Lemma~\ref{lemma:bound2}. 
    Inequality~(\ref{eq:main13}) further separates the regret (the last term of (\ref{eq:main12})) into two parts, one is from large corruption as described in Lemma~\ref{thm:largeCorr1} and another part is from small corruption as described in Lemma~\ref{thm:smallCorr1}; 
    Inequality~(\ref{eq:main14}) comes from Lemma~\ref{thm:smallCorr1}.  
\end{proof}

\end{document}